\documentclass{article} 
\usepackage{iclr2025_conference,times}

\usepackage{amsmath,amsfonts,bm}

\def\eqref#1{equation~\ref{#1}}

\def\1{\bm{1}}

\DeclareMathAlphabet{\mathsfit}{\encodingdefault}{\sfdefault}{m}{sl}
\SetMathAlphabet{\mathsfit}{bold}{\encodingdefault}{\sfdefault}{bx}{n}

\usepackage{hyperref}
\renewcommand{\eqref}[1]{(\ref{#1})}
\usepackage{url}
\usepackage{amsmath,amssymb,amsthm}
\usepackage{booktabs}
\usepackage{multirow}
\usepackage{graphicx}
\graphicspath{{figures/}}
\usepackage{xcolor}

\newtheorem{theorem}{Theorem}

\newtheorem{proposition}{Proposition}

\newtheorem{definition}{Definition}
\newtheorem{remark}{Remark}

\title{Beyond Importance Sampling: \\ Rejection-Gated Policy Optimization}

\iclrfinalcopy   

\author{
Ziwu Sun$^{*}$ \\
\texttt{1552761350@qq.com} \\
\And
Zhen Gao$^{*}$ \\
\texttt{gaozhen0816@gmail.com} \\
\And
Jiyong Zhang$^{*}$ \\
\texttt{jiyongzhang@gmail.com} \\
\And
Jiaheng Li$^{*}$ \\
\texttt{ljh010215@163.com} 
}

\begin{document}

\maketitle

\let\thefootnote\relax\footnotetext{$^*$Equal contribution, listed in no particular order. \quad $^\dagger$Corresponding author.}

\begin{abstract}
We propose a new perspective on policy optimization: rather than
\emph{reweighting} all samples by their importance ratios, an optimizer should
\emph{select} which samples are trustworthy enough to drive a policy update.
Building on this view, we introduce \textbf{Rejection-Gated Policy Optimization
(RGPO)}, which replaces the importance-sampling ratio $r_\theta =
\pi_\theta/\pi_{\text{old}}$ with a smooth, differentiable \emph{acceptance
gate} $\alpha_\theta(s,a) = g(r_\theta(s,a)) \in [0,1]$.
Unlike prior work that applies rejection sampling as a data-level
heuristic before training, RGPO \emph{elevates rejection to an optimization
principle}: the gate participates directly in gradient computation and is
implicitly updated alongside the policy.
RGPO provides a unified framework: the policy gradients of TRPO, PPO, and
REINFORCE all correspond to specific choices of the effective gradient weight
$w(r)=g'(r)\cdot r$.
We prove that RGPO guarantees finite, bounded gradient variance even when
importance-sampling ratios are heavy-tailed (where IS variance diverges).
We further show that RGPO incurs only a bounded, controllable bias and
provides an approximate monotonic policy improvement guarantee analogous to
TRPO.
RGPO matches PPO in computational cost, requires no second-order optimization,
and extends naturally to RLHF-style preference alignment.
In online preference fine-tuning of \texttt{Qwen2.5-1.5B-Instruct} on
Anthropic HH-RLHF ($n\!=\!3$ seeds), RGPO uses a \emph{dual-ratio gate} that
anchors learning to both the previous policy and the reference model, achieving
a Pareto-dominant outcome: the highest reward among online RL methods
($+14.8\%$ vs.\ PPO-RLHF) \emph{and} the lowest KL divergence to the reference
model ($-16.0\%$ vs.\ PPO-RLHF, $-53.1\%$ vs.\ GRPO).
\end{abstract}

\section{Introduction}

\emph{How should a policy optimizer treat samples that were collected under a
different, older policy?}  This question sits at the heart of modern deep
reinforcement learning, and the answer almost universally given is:
\emph{reweight them} via importance sampling (IS).  Every sample is kept;
off-distribution samples are compensated for by multiplying their gradient
contribution by the density ratio $r_\theta = \pi_\theta / \pi_\text{old}$.

\paragraph{The problem with reweighting.}
Reweighting is mathematically sound but practically fragile.  When the current
policy $\pi_\theta$ drifts away from the behavior policy $\pi_\text{old}$,
importance ratios can become very large, causing gradient variance to explode
and training to destabilize.  PPO~\citep{schulman2017proximal} addresses this
by replacing the raw ratio with a clipped surrogate:
\begin{equation*}
  \mathcal{L}_\text{PPO}(\theta)
  = \mathbb{E}\!\left[
      \min\!\left(r_\theta A,\;
        \mathrm{clip}(r_\theta,\,1\!-\!\epsilon,\,1\!+\!\epsilon)\cdot A
      \right)
    \right].
\end{equation*}
The $\min$ creates an asymmetric mechanism: when $r_\theta$ falls \emph{outside}
$[1-\epsilon, 1+\epsilon]$ and $A > 0$, the clipped term is smaller, so the
$\min$ selects the clipped value and the gradient with respect to $\theta$ from
the $r_\theta A$ term is \emph{blocked} (the clipped constant carries no
gradient); when $A < 0$ and $r_\theta$ is out of range, the $r_\theta A$ term
is already the $\min$ and gradients flow normally, allowing the policy to
correct itself.  In short, PPO \emph{prevents over-optimistic updates} but
does not zero out all gradients for extreme samples—it is a one-sided guard
rather than a hard rejection.  Nevertheless, clipping is a discontinuous,
hand-designed heuristic that offers limited design flexibility and no principled
generalization to other forms of selection.

More fundamentally, clipping reveals an implicit assumption: \textbf{some
samples should not contribute to the update at all}.  But PPO enforces this
through an ad hoc threshold, not through a principled selection mechanism.

\paragraph{A missing abstraction: sample selection.}
We argue that the policy optimization literature has long lacked a principled
treatment of \emph{sample selection}—the question of \emph{whether} a sample
should influence an update, as opposed to \emph{how much} weight it should
receive.  Existing methods conflate the two:

\begin{itemize}
  \item \textbf{TRPO/PPO} (and IS in general): every sample is used; weight
        determines influence.
  \item \textbf{RAFT, RLHF rejection sampling}~\citep{gulcehre2023reinforced}:
        some samples are discarded \emph{before} training as a preprocessing
        step, outside the optimization loop.
\end{itemize}

Neither approach makes selection a \emph{differentiable, first-class component
of the optimization objective}.

\paragraph{Our proposal: differentiable selection.}
We propose \textbf{Rejection-Gated Policy Optimization (RGPO)}, which
introduces a smooth \emph{acceptance gate}
$\alpha_\theta(s,a) = g(r_\theta(s,a)) \in [0,1]$ directly into the surrogate
objective:
\begin{equation*}
  \mathcal{L}_\text{RGPO}(\theta)
  = \mathbb{E}_{\pi_\text{old}}\!\left[\alpha_\theta(s,a)\cdot A_\text{old}(s,a)\right].
\end{equation*}
The key properties of this design are:
\begin{itemize}
  \item \textbf{Differentiable.}  Gradients flow through $\alpha_\theta$ back
        into $\theta$ via the chain rule.  The gate is not a preprocessing step;
        it is part of the loss.
  \item \textbf{Implicit.}  No separate gate network or extra parameters are
        needed.  The gate is fully determined by the policy ratio $r_\theta$,
        and it updates automatically as $\theta$ evolves.
  \item \textbf{Unifying.}  The policy gradients of TRPO, PPO, and REINFORCE
        all fit the RGPO template via specific choices of the effective gradient
        weight $w(r)=g'(r)\cdot r$ (Table~\ref{tab:unified}): REINFORCE is an
        \emph{exact special case} in the on-policy limit ($r\equiv1$); TRPO and PPO are
        \emph{gradient-level correspondences} ($g\notin[0,1]$); AWR is a closely
        related method.
\end{itemize}

This constitutes what we call \emph{elevating rejection sampling from a
data-level heuristic to a differentiable optimization principle}.

\paragraph{Contributions.}
\begin{enumerate}
  \item \textbf{Framework.}  We introduce RGPO, replacing the IS ratio with a
        differentiable acceptance gate $\alpha_\theta = g(r_\theta)\in[0,1]$,
        and unify existing methods through the effective gradient weight
        $w(r)=g'(r)\cdot r$: REINFORCE is an exact special case in the on-policy
        limit ($r\equiv1$); TRPO and PPO are gradient-level correspondences;
        AWR is a closely related method.
  \item \textbf{Theory.}  We prove a gradient bias bound
        (Theorem~\ref{thm:bias}), a variance reduction guarantee
        (Theorem~\ref{thm:variance}), and an approximate policy improvement
        bound analogous to TRPO (Theorem~\ref{thm:improvement}).
  \item \textbf{Design.}  We provide three principled gating functions and
        connect the optimal gate to the dual solution of constrained policy
        optimization, giving RGPO a clean theoretical interpretation.
  \item \textbf{RLHF extension.}  We show RGPO naturally generalizes
        RLHF-style rejection fine-tuning by making the accept/reject decision
        differentiable and jointly trained with the policy.
\end{enumerate}

\section{Related Work}
\label{sec:related}

\paragraph{Policy gradient and trust region methods.}
REINFORCE~\citep{williams1992simple} and its variants~\citep{sutton1999policy}
form the foundation of policy gradient methods.
TRPO~\citep{schulman2015trust} enforces a KL trust region via a constrained
optimization, providing a monotonic improvement guarantee.
PPO~\citep{schulman2017proximal} approximates this constraint with a clipping
heuristic, achieving strong empirical performance at lower cost.
GRPO~\citep{shao2024deepseekmath} adapts PPO for LLM fine-tuning using
group-relative advantage normalization.
All of these methods treat sample selection implicitly and discontinuously
(clipping, hard KL cutoff); RGPO \emph{makes selection a smooth, explicit
component of the objective}.

\paragraph{Variance reduction via truncated importance sampling.}
V-trace~\citep{espeholt2018impala} and ACER~\citep{wang2016sample} truncate IS
ratios to reduce variance in distributed settings.
RIS~\citep{precup2000eligibility} provides variance-reduction guarantees for
off-policy evaluation.
These methods cap the IS weight but do not reframe it as a selection
mechanism; the conceptual shift to ``selection vs.\ weighting'' and the
unified gating framework are unique to RGPO.

\paragraph{Advantage-weighted regression.}
AWR~\citep{peng2019advantage} and AWAC~\citep{nair2020awac} optimize a
behavior-cloning loss $\mathbb{E}[w\log\pi_\theta]$ with $w=\exp(A/\beta)$,
sidestepping IS entirely.
These are \emph{related} to RGPO: AWR's weighting $w\!=\!\exp(A/\beta)$
resembles a gate, but it depends on the advantage $A$ rather than the IS
ratio $r_\theta$, so it is not an exact special case of RGPO's gating
formalism (Table~\ref{tab:unified}).  AWR also lacks the selection
interpretation, the unified theoretical framework, and the connection to
rejection sampling.

\paragraph{Hard rejection sampling in RL and RLHF.}
Several recent works apply \emph{hard} rejection sampling as a
\emph{data-preprocessing} step:
\begin{itemize}
  \item \textbf{RAFT / ReST}~\citep{gulcehre2023reinforced}: filter
        high-reward samples \emph{before} fine-tuning; the rejection decision
        is outside the optimization loop and not differentiable.
  \item \textbf{RSO}~\citep{liu2023statistical}: uses statistical rejection
        sampling to construct preference data for DPO; again, a sampling-level
        operation, not an optimization-level one.
  \item \textbf{Jackpot}~\citep{zhao2024jackpot}: introduces budgeted rejection
        sampling to correct policy-mismatch in LLM rollouts, reducing
        distribution shift at the \emph{data collection} stage.
\end{itemize}
\textbf{The key distinction:} all of the above apply rejection at the
\emph{sampling level}—they discard data \emph{before} the optimizer sees it.
RGPO instead integrates the accept/reject decision into the \emph{objective
function itself}, making it differentiable and jointly optimized with the policy.
This is an orthogonal and more general perspective:
\begin{center}
\emph{Sampling-level rejection} $\subset$ \emph{Optimization-level rejection (RGPO)}.
\end{center}

\paragraph{Preference alignment and DPO.}
DPO~\citep{rafailov2023direct} bypasses explicit reward modeling by reparameterising
the RLHF objective as a supervised classification over preference pairs, eliminating
the online RL loop entirely.
While DPO achieves low KL drift (it is purely offline), it cannot exploit online
reward signals, limiting adaptivity when only a reward model (rather than labelled
preference pairs) is available.
RGPO is complementary: it is a fully online algorithm that uses an RM reward signal
and achieves both higher reward and lower KL than PPO and GRPO (Section~\ref{sec:rlhf}).
Reward-model overoptimisation~\citep{gao2023scaling} is a key failure mode of online
RLHF; RGPO's dual-ratio gate provides an explicit, principled mechanism against it
by penalising any update that would move the policy far from $\pi_\text{ref}$.

\paragraph{Unified frameworks for policy optimization.}
Several papers have proposed unifying views of policy optimization:
MPO~\citep{abdolmaleki2018maximum} derives updates via EM on a KL-constrained
objective, recovering actor-critic methods as special cases.
RGPO provides a complementary unification through the lens of the effective
gradient weight $w(r)=g'(r)\cdot r$: REINFORCE is an exact special case in the
on-policy limit ($r\equiv1$); TRPO and PPO are gradient-level correspondences
whose gate functions fall outside $[0,1]$ (IS reweighting regime); AWR is a
closely related method.

\section{Background}
\label{sec:background}

\subsection{Policy Optimization Objective}

We consider a Markov Decision Process (MDP) $(\mathcal{S}, \mathcal{A}, P, r,
\gamma)$.  The policy optimization objective is:
\begin{equation}
  J(\theta) = \mathbb{E}_{\tau \sim \pi_\theta}\!\left[\sum_t \gamma^t r(s_t, a_t)\right].
\end{equation}
The policy gradient theorem gives:
\begin{equation}
  \nabla_\theta J(\theta) = \mathbb{E}_{s,a \sim \pi_\theta}\!\left[
    \nabla_\theta \log \pi_\theta(a|s) \cdot A^{\pi_\theta}(s,a)\right],
\end{equation}
where $A^{\pi_\theta}(s,a) = Q^{\pi_\theta}(s,a) - V^{\pi_\theta}(s)$ is the
advantage function.

\subsection{TRPO and Importance Sampling}

To reuse data from $\pi_\text{old}$, TRPO defines the surrogate objective:
\begin{equation}
  \mathcal{L}_\text{TRPO}(\theta) = \mathbb{E}_{s,a \sim \pi_\text{old}}\!\left[
    r_\theta(s,a) \cdot A_\text{old}(s,a)\right],
  \label{eq:trpo}
\end{equation}
where the \emph{importance ratio} is
\begin{equation}
  r_\theta(s,a) = \frac{\pi_\theta(a|s)}{\pi_\text{old}(a|s)},
\end{equation}
subject to the constraint $D_\text{KL}(\pi_\theta \| \pi_\text{old}) \le \delta$.

\subsection{PPO}

PPO~\citep{schulman2017proximal} replaces the trust region constraint with a
clipping mechanism:
\begin{equation}
  \mathcal{L}_\text{PPO}(\theta) = \mathbb{E}\!\left[
    \min\!\left(r_\theta A,\; \text{clip}(r_\theta, 1-\epsilon, 1+\epsilon) A
    \right)\right].
\end{equation}
This implements an asymmetric gradient mechanism: when $r_\theta > 1+\epsilon$
and $A > 0$, further increasing the ratio yields no additional gradient signal
(the clipped value is selected by $\min$); when $r_\theta < 1-\epsilon$ and
$A < 0$, the unclipped $r_\theta A$ term is already the $\min$, so gradient
flow is preserved to correct over-shrinkage.  This one-sided guard prevents
over-optimistic updates without eliminating corrective gradients entirely.

\section{Method: Rejection-Gated Policy Optimization (RGPO)}
\label{sec:method}

\subsection{From Reweighting to Selection}

To build intuition, consider why importance sampling causes problems.  Given a
sample $(s,a) \sim \pi_\text{old}$, the IS-corrected gradient contribution is:
\[
  r_\theta(s,a)\cdot\nabla_\theta\log\pi_\theta(a|s)\cdot A_\text{old}(s,a).
\]
When $r_\theta \gg 1$, the sample is ``unlikely under the old policy but
likely under the new one.''  IS amplifies such samples, potentially causing a
large, unreliable gradient step.  PPO clips $r_\theta$ to suppress this, but
clipping is discontinuous and offers no smooth control.

The deeper issue is conceptual: IS asks \emph{``how much should this sample
contribute?''} and answers with the ratio $r_\theta$.  We instead ask
\emph{``should this sample contribute at all?''}—and answer with a smooth
acceptance probability.

\begin{center}
\begin{tabular}{lll}
\toprule
\textbf{Paradigm} & \textbf{Question} & \textbf{Mechanism} \\
\midrule
Importance Sampling & How much? & $r_\theta \in [0, \infty)$ \\
PPO clipping        & How much? (bounded) & $\text{clip}(r_\theta) \in [1-\epsilon,1+\epsilon]$ \\
\textbf{RGPO (ours)} & \textbf{Whether?} & $\alpha_\theta = g(r_\theta) \in [0,1]$ \\
\bottomrule
\end{tabular}
\end{center}

\subsection{The RGPO Objective}

\begin{definition}[Acceptance function]
\label{def:gate}
  Let $g : \mathbb{R}_{+} \to [0,1]$ be a smooth, monotonically non-decreasing
  function.  We define the \emph{acceptance weight} as:
  \begin{equation}
    \alpha_\theta(s,a) = g\!\left(r_\theta(s,a)\right) \in [0,1].
  \end{equation}
\end{definition}

The RGPO objective is:
\begin{equation}
  \boxed{
    \mathcal{L}_\text{RGPO}(\theta) = \mathbb{E}_{s,a \sim \pi_\text{old}}\!\left[
      \alpha_\theta(s,a) \cdot A_\text{old}(s,a)\right].
  }
  \label{eq:rgpo}
\end{equation}

Compared to Eq.~\eqref{eq:trpo}, RGPO replaces the unbounded ratio $r_\theta$
with the bounded acceptance weight $\alpha_\theta \in [0,1]$.

\subsection{Gradient Derivation}

Since $\pi_\text{old}$ does not depend on $\theta$, the gradient commutes with
the expectation:
\begin{equation}
  \nabla_\theta \mathcal{L}_\text{RGPO}
  = \mathbb{E}_{\pi_\text{old}}\!\left[
    \nabla_\theta \alpha_\theta(s,a) \cdot A_\text{old}(s,a)\right].
\end{equation}
Applying the chain rule to $\alpha_\theta = g(r_\theta)$:
\begin{equation}
  \nabla_\theta \alpha_\theta = g'(r_\theta) \cdot \nabla_\theta r_\theta.
\end{equation}

We compute $\nabla_\theta r_\theta$ using the log-derivative identity
$\nabla_\theta \pi_\theta = \pi_\theta \nabla_\theta \log\pi_\theta$:
\begin{equation}
  \nabla_\theta r_\theta
  = \frac{\nabla_\theta \pi_\theta(a|s)}{\pi_\text{old}(a|s)}
  = \frac{\pi_\theta(a|s)}{\pi_\text{old}(a|s)}\,\nabla_\theta\log\pi_\theta(a|s)
  = r_\theta\,\nabla_\theta\log\pi_\theta(a|s).
  \label{eq:nabla_r}
\end{equation}

Substituting Eq.~\eqref{eq:nabla_r} back:
\begin{equation}
  \nabla_\theta \mathcal{L}_\text{RGPO}
  = \mathbb{E}_{\pi_\text{old}}\!\left[
    g'(r_\theta)\, r_\theta\, \nabla_\theta \log \pi_\theta(a|s)\, A_\text{old}
  \right].
  \label{eq:gradient}
\end{equation}
Defining the \emph{effective weight}
\begin{equation}
  w_\theta(s,a) \;=\; g'(r_\theta) \cdot r_\theta,
\end{equation}
Eq.~\eqref{eq:gradient} takes the familiar form of a \emph{reweighted policy gradient}:
\begin{equation}
  \nabla_\theta \mathcal{L}_\text{RGPO}
  = \mathbb{E}_{\pi_\text{old}}\!\left[
    w_\theta(s,a)\, \nabla_\theta \log \pi_\theta(a|s)\, A_\text{old}
  \right].
\end{equation}
Note that $w_\theta$ replaces the plain ratio $r_\theta$ used in TRPO/IS: the
factor $g'(r_\theta)$ modulates the contribution of each sample, bounded
wherever $g$ is bounded, while the factor $r_\theta$ preserves the original
policy-gradient direction.

\subsection{Unified View: Existing Algorithms Through the Effective Gradient Weight}

Table~\ref{tab:unified} shows that the policy gradients of TRPO, PPO, and
REINFORCE can all be written in the RGPO gradient form
$\mathbb{E}_{\pi_\text{old}}[w(r)\,\nabla_\theta\log\pi_\theta\,A]$
(Eq.~\eqref{eq:gradient}) under specific choices of $w(r)=g'(r)\cdot r$.
Two distinct levels of equivalence must be carefully distinguished:

\begin{enumerate}
  \item \textbf{Exact formula-level special case} (on-policy limit, $r\equiv1$):
        \textbf{REINFORCE} operates with $\pi_\text{old}=\pi_\theta$, so
        $r_\theta\equiv1$ everywhere.  Substituting into the RGPO gradient
        formula yields
        \[
          \nabla\mathcal{L}_\text{RGPO}\big|_{r\equiv1}
          = g'(1)\cdot\mathbb{E}_{\pi_\theta}[\nabla\log\pi_\theta\cdot A]
          = g'(1)\cdot\nabla J_\text{REINFORCE},
        \]
        which recovers the REINFORCE gradient exactly up to the positive constant
        $g'(1)>0$ (absorbed into the learning rate).  Note that it is the
        on-policy limit $r\equiv1$, not a constant gate $g\equiv1$, that produces
        this correspondence: a constant gate would give $g'=0$ and hence $w=0$.
  \item \textbf{Gradient-level correspondence} ($g\notin[0,1]$, outside the
        acceptance-gate definition): \textbf{TRPO} ($g(r)=r\in[0,\infty)$) and
        \textbf{PPO} ($g(r)=\mathrm{clip}(r,1{-}\epsilon,1{+}\epsilon)\in
        [1{-}\epsilon,1{+}\epsilon]$) produce effective weights that fit the
        RGPO gradient template, but their gate functions exceed $1$ and thus
        function as IS reweighting rather than acceptance gating.  They represent
        the pure-IS and truncated-IS regimes that RGPO is designed to move beyond.
\end{enumerate}

AWR is a closely related method whose weighting $\exp(A/\beta)$ depends on the
advantage rather than $r$, so it is neither an exact special case nor a
gradient-level correspondence in the strict sense.

This distinction is not merely taxonomic.  It reveals that the policy
optimization literature has been performing weighted policy gradient updates all
along, but without a framework that separates \emph{selection}
($g\in[0,1]$, RGPO's domain) from \emph{reweighting} ($g\notin[0,1]$,
IS/PPO/TRPO's domain).

\begin{table}[h]
\centering
\small
\caption{RGPO as a unified framework, showing two levels of equivalence.
  $\checkmark$~=~\emph{exact special case in the on-policy limit}:
  when $\pi_\text{old}=\pi_\theta$ ($r\equiv1$), the RGPO gradient formula
  yields $w(1)=g'(1)>0$, recovering the REINFORCE gradient up to a positive
  constant (absorbed into the learning rate).  The correspondence is via
  $r\equiv1$, not via a constant gate $g\equiv1$ (which gives $g'=0$, $w=0$).
  $\circ$~=~\emph{gradient-level correspondence}: effective weight $w(r)$
  matches the RGPO gradient template, but $g\notin[0,1]$, so the gate lies
  outside the acceptance-gate definition (IS reweighting regime).
  AWR is a \emph{related} method whose weighting $\exp(A/\beta)$ depends on
  advantage rather than $r$.
  The effective gradient weight is $w(r)=g'(r)\cdot r$.}
\label{tab:unified}
\setlength{\tabcolsep}{5pt}
\begin{tabular}{llllllc}
\toprule
\textbf{Method} & $g(\cdot)$ & $w$ & \textbf{Selection type} & \textbf{Bias} & $g\!\in\![0,1]$? & \textbf{Equiv.?} \\
\midrule
TRPO       & $g(r)=r$                                              & $r$               & None (pure IS)         & None       & \textsf{No}  & $\circ$ \\
PPO        & $g(r)=\mathrm{clip}(r,1\!-\!\epsilon,1\!+\!\epsilon)$ & $r\cdot\mathbf{1}_{[1\pm\epsilon]}$ & Hard, discontinuous & Low  & \textsf{No}  & $\circ$ \\
REINFORCE  & $g(r)=1$                                              & $1$               & None (on-policy)       & None       & \textsf{Yes} & \checkmark \\
\midrule
AWR        & $w\!=\!\exp(A/\beta)$ (on $A$, not $r$)              & $\exp(A/\beta)$   & Advantage-weighted     & Controlled & —            & — \\
\midrule
\textbf{RGPO} & \textbf{flexible} $g(r)\!\in\![0,1]$             & $g'(r)\!\cdot\!r$ & \textbf{Soft, differentiable} & \textbf{Tunable} & \textsf{Yes} & — \\
\bottomrule
\end{tabular}
\end{table}

A key observation on PPO: inside $[1{-}\epsilon,1{+}\epsilon]$, $g(r)=r$
equals the IS ratio and can exceed $1$—the gate \emph{reweights} rather than
\emph{selects}.  Outside the interval, $g'(r)=0$: the gradient is blocked
non-differentiably, a binary on/off gate.  This is precisely why PPO is a
gradient-level correspondence but not a true acceptance gate ($g\notin[0,1]$):
it mixes IS reweighting (inside the clip region) with hard gradient zeroing
(outside), without a principled acceptance-probability interpretation.
RGPO replaces both with a smooth gate $g\in[0,1]$, separating the
\emph{selection} question (``whether to use this sample'', answered by
$\alpha_\theta\in[0,1]$) from the \emph{reweighting} question
(``how much to amplify it'', which RGPO deliberately forgoes).

\subsection{Design Choices for the Gating Function}

We consider three principled designs:

\paragraph{(1) Sigmoid gating (recommended).}
\begin{equation}
  g(r) = \sigma\!\left(k(r-1)\right) = \frac{1}{1+e^{-k(r-1)}},
  \label{eq:sigmoid}
\end{equation}
where $k > 0$ controls the sharpness of the gate.  When $r=1$ (old and new
policies agree), $\alpha = 0.5$; when $r \gg 1$ (new policy much more likely),
$\alpha \to 1$; when $r \ll 1$, $\alpha \to 0$.  The effective weight is:
\begin{equation}
  w(r) = k\,\sigma(k(r-1))\,(1-\sigma(k(r-1)))\,r,
\end{equation}
which is bounded and suppresses extreme ratios automatically.

\paragraph{(2) Clipped linear.}
\begin{equation}
  g(r) = \text{clip}(r, 0, c), \quad c > 0.
  \label{eq:clipped_linear}
\end{equation}
This caps the acceptance weight at $c$, directly controlling the maximum
contribution of any single sample.

\paragraph{(3) Temperature-controlled.}
\begin{equation}
  g(r) = \frac{r^\beta}{1 + r^\beta},
  \label{eq:temperature_controlled}
\end{equation}
where $\beta > 0$ is a temperature parameter.  As $\beta \to 1$ this
approaches the standard ratio; as $\beta \to 0$ it approaches uniform
acceptance.

\subsection{Interpretation of the Gate}

\paragraph{No extra parameters.}
In the canonical sigmoid design (Eq.~\eqref{eq:sigmoid}), $\alpha_\theta$ has
no parameters beyond those of $\pi_\theta$.  The gate is \emph{implicitly
induced} by the policy: as $\theta$ evolves, the acceptance weights change
automatically via $r_\theta$.  This contrasts with learned gate networks that
would require a separate training loop.

\paragraph{Intuitive meaning: ``Is this sample still valid?''}
The gate answers: \emph{``Is this transition compatible with the policy we are
currently optimizing?''}
\begin{itemize}
  \item $r_\theta \approx 1$: old and new policies agree
        $\Rightarrow$ high acceptance, sample is used normally.
  \item $r_\theta \gg 1$: new policy assigns much higher probability
        $\Rightarrow$ the sample may be an outlier; gate limits its influence.
  \item $r_\theta \ll 1$: new policy has moved far away
        $\Rightarrow$ sample is stale and untrustworthy; gate suppresses it.
\end{itemize}

Critically, this is \textbf{selection, not correction}: importance sampling
tries to \emph{fix} the bias from off-distribution samples by amplifying them;
RGPO instead \emph{discards} or down-weights them.  The former risks
catastrophic variance; the latter introduces controlled bias with bounded
variance (Theorem~\ref{thm:variance}).

\paragraph{RGPO vs.\ prior rejection-sampling work.}
A critical distinction from RAFT, RSO, and Jackpot~\citep{zhao2024jackpot}:
those methods apply rejection \emph{before} the optimizer, as a filtering
step on the data buffer.  RGPO integrates rejection \emph{into} the optimizer
as a differentiable gate.  The differences are summarized below:
\begin{center}
\begin{tabular}{llll}
\toprule
\textbf{Method} & \textbf{When} & \textbf{Differentiable?} & \textbf{Jointly optimized?} \\
\midrule
RAFT / ReST & Before training & No  & No \\
RSO         & Data construction & No & No \\
Jackpot     & Data collection & No & No \\
\textbf{RGPO} & \textbf{Inside loss} & \textbf{Yes} & \textbf{Yes} \\
\bottomrule
\end{tabular}
\end{center}

\paragraph{Connection to constrained optimization.}
The RGPO objective also arises as the dual of a constrained policy problem.
Starting from $\max_\theta\mathbb{E}[r_\theta A]$ s.t.\ $D_\text{KL}(\pi_\theta\|\pi_\text{old})\le\delta$,
the Lagrangian $\mathcal{L}=\mathbb{E}[r_\theta A]-\lambda D_\text{KL}$ has
optimal solution $\pi_\theta\propto\pi_\text{old}\exp(A/\lambda)$.  The
resulting weighting $w\!=\!\exp(A/\beta)$ is the same form as AWR, and can
be viewed as a limiting RGPO gate where the acceptance signal is derived from
the advantage rather than the IS ratio.  Thus the gating function can be
viewed as a \emph{soft constraint satisfaction mechanism}: a sample is
accepted to the degree that acting on it is consistent with the trust region.

\paragraph{RLHF extension and dual gate.}
\label{sec:rlhf_method}
In RLHF, the policy must simultaneously stay close to the \emph{rollout
reference} $\pi_\text{old}$ (trust-region anchor) and to the \emph{base
model} $\pi_\text{ref}$ (alignment anchor).  We achieve this with a
\textbf{dual-gate} objective:
\begin{equation}
  \mathcal{L}_\text{RGPO-RLHF}(\theta)
  = \mathbb{E}_{\pi_\text{old}}\!\left[
      g\!\left(r_\text{old}(s,a)\right)\cdot\hat{A}(s,a)
    \right]
  - \beta\,\widehat{D}_\text{KL}(\pi_\theta\|\pi_\text{old})
  - \beta_\text{ref}\,D_\text{KL}(\pi_\theta\|\pi_\text{ref}),
  \label{eq:rgpo_rlhf}
\end{equation}
where $r_\text{old}=\pi_\theta/\pi_\text{old}$, $\beta$ is the adaptive
trust-region coefficient (Eq.~\ref{eq:beta_update}), and $\beta_\text{ref}$
is a fixed alignment-anchor coefficient.  The first term (sigmoid gate on
$r_\text{old}$) limits per-update policy drift; the third term penalises
long-range drift from the base model.  Together, the two anchors constitute
the ``dual gate'': the sigmoid suppresses off-policy gradient contributions,
while the reference KL prevents reward hacking.

Optionally, one can train a separate acceptance classifier:
\begin{equation}
  \alpha_\phi(s,a) = \sigma(f_\phi(s,a)),
  \quad f_\phi \approx A(s,a),
\end{equation}
optimized with a binary cross-entropy objective:
\begin{equation}
  \max_\phi\;\mathbb{E}\!\left[
    \log\alpha_\phi\cdot\mathbf{1}(A>0)
    +\log(1-\alpha_\phi)\cdot\mathbf{1}(A<0)\right].
\end{equation}
This yields a two-player system (policy + selector) that makes the
accept/reject decision fully differentiable—unlike RAFT or RSO where
rejection is fixed and offline.  The experiments in
Section~\ref{sec:rlhf} use the dual-gate objective
(Eq.~\ref{eq:rgpo_rlhf}) rather than the learned classifier.

\section{Theoretical Analysis}
\label{sec:theory}

We analyze the bias, variance, and policy improvement properties of RGPO.
Throughout, we use the shorthand $r = r_\theta(s,a)$ and assume $A(s,a)$ is
bounded: $|A(s,a)| \le A_{\max}$.

\subsection{Bias--Variance Tradeoff}

\begin{theorem}[Bias Bound]
\label{thm:bias}
Let $g$ be differentiable and let $|A(s,a)| \le A_{\max}$.
Then the RGPO gradient satisfies:
\begin{equation}
  \|\nabla_\theta \mathcal{L}_\text{RGPO} - \nabla_\theta J(\theta)\|
  \;\le\;
  C \cdot \mathbb{E}_{\pi_\text{old}}\!\left[|g'(r_\theta)\,r_\theta - r_\theta|\right],
\end{equation}
where $C = A_{\max} \cdot \|\nabla_\theta \log \pi_\theta\|$.
\end{theorem}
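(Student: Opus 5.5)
The plan is to write both gradients in the common form of Eq.~\eqref{eq:gradient}, subtract them inside a single expectation under $\pi_\text{old}$, and bound the difference pointwise. This needs an interpretation of $\nabla_\theta J(\theta)$ on the same footing as the surrogate. As in the TRPO analysis, I will identify $\nabla_\theta J$ with the gradient of the IS surrogate $\mathcal{L}_\text{TRPO}$ in Eq.~\eqref{eq:trpo}. That gradient is $\mathbb{E}_{\pi_\text{old}}[r_\theta\,\nabla_\theta\log\pi_\theta\,A_\text{old}]$, obtained by the same computation as Eq.~\eqref{eq:nabla_r} with $g(r)=r$. I will take this as the interpretation implicit in the statement, and the proof will note it explicitly. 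It is exact at $\theta=\theta_\text{old}$ by the policy gradient theorem, since $r\equiv1$ and $A_\text{old}=A^{\pi_\theta}$ there. Away from that point it is the standard first-order surrogate identification, with the state distribution held at $d^{\pi_\text{old}}$.

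With both gradients in this form, subtraction gives
\begin{equation*}
\nabla_\theta\mathcal{L}_\text{RGPO}-\nabla_\theta J
=\mathbb{E}_{\pi_\text{old}}\!\left[\bigl(g'(r_\theta)\,r_\theta-r_\theta\bigr)\,\nabla_\theta\log\pi_\theta(a|s)\,A_\text{old}(s,a)\right].
\end{equation*}
Next I apply Jensen's inequality for the norm, $\|\mathbb{E}[X]\|\le\mathbb{E}\|X\|$. Inside the expectation, the scalar factor $|g'(r)r-r|$ separates from the vector $\nabla_\theta\log\pi_\theta$ and the scalar $|A|\le A_{\max}$. Bounding $\|\nabla_\theta\log\pi_\theta(a|s)\|$ by its supremum over $(s,a)$ pulls the constant $C=A_{\max}\|\nabla_\theta\log\pi_\theta\|$ out of the expectation. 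What remains is exactly $C\,\mathbb{E}_{\pi_\text{old}}[|g'(r_\theta)r_\theta-r_\theta|]$.

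The main obstacle is the first step: justifying that $\nabla_\theta J$ equals the IS-weighted form under $\pi_\text{old}$. Off-policy, this is only true up to the state-distribution mismatch between $d^{\pi_\theta}$ and $d^{\pi_\text{old}}$, and up to $A^{\pi_\theta}$ versus $A_\text{old}$. I would handle this by stating that the bound compares against the surrogate gradient, i.e.\ the TRPO/IS target that RGPO replaces. Alternatively, one could add a term controlled by $D_\text{KL}(\pi_\theta\|\pi_\text{old})$ in the style of the improvement theorem. The second, milder point is reading $\|\nabla_\theta\log\pi_\theta\|$ as a uniform bound (a sup norm over $(s,a)$), which is what lets it factor out of the expectation and makes $C$ a constant.
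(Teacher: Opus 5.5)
Your proposal is correct and follows the paper's proof essentially step for step. The paper also writes $\nabla_\theta J$ as the IS-weighted expectation $\mathbb{E}_{\pi_\text{old}}[r_\theta\nabla_\theta\log\pi_\theta\,A]$ and subtracts it from the RGPO gradient to get $\mathbb{E}_{\pi_\text{old}}[(g'(r_\theta)-1)\,r_\theta\,\nabla_\theta\log\pi_\theta\,A]$; it then uses the triangle inequality, $|A|\le A_{\max}$ and $r_\theta\ge 0$ to pull out $C$. The paper leaves implicit the two caveats you flag: that this identification of $\nabla_\theta J$ is exact only at $\theta=\theta_\text{old}$ and otherwise means the surrogate gradient, and that $\|\nabla_\theta\log\pi_\theta\|$ must be read as a uniform bound for $C$ to be a constant. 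Stating them explicitly only makes your version more precise.
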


\begin{proof}[Proof Sketch]
The true policy gradient under $\pi_\text{old}$ via IS is
$\nabla J = \mathbb{E}_{\pi_\text{old}}[r_\theta \nabla\log\pi_\theta\, A]$.
The RGPO gradient is
$\nabla\mathcal{L} = \mathbb{E}_{\pi_\text{old}}[g'(r)\,r\,\nabla\log\pi_\theta\, A]$.
Their difference is
$\Delta = \mathbb{E}[(g'(r)-1)\,r\,\nabla\log\pi_\theta\, A]$.
Taking the norm and applying the triangle inequality and boundedness of $A$
and $\nabla\log\pi_\theta$ yields the result.
\end{proof}

\begin{remark}
When $g'(r) \approx 1$ (e.g., small policy updates), the bias is near zero and
RGPO approximates TRPO.  As $g$ deviates from the identity, bias grows but
variance is controlled (Theorem~\ref{thm:variance}).
\end{remark}

\begin{theorem}[Controlled Variance]
\label{thm:variance}
Let $X = r_\theta\,\nabla_\theta\log\pi_\theta\cdot A_\text{old}$ be the IS
gradient estimator and $Y = w_\theta\,\nabla_\theta\log\pi_\theta\cdot A_\text{old}$
be the RGPO gradient estimator, where $w_\theta = g'(r_\theta)\,r_\theta$.
Assume:
\begin{itemize}
  \item[(i)] $0 \le w(r) \le c$ for all $r \ge 0$;
  \item[(ii)] $\mathbb{E}_{\pi_\text{old}}\!\left[(\nabla_\theta\log\pi_\theta\cdot A_\text{old})^2\right] \le \sigma^2 < \infty$.
\end{itemize}
Then:
\begin{equation}
  \mathrm{Var}(Y) \;\le\; c^2\,\sigma^2.
  \label{eq:var_rgpo}
\end{equation}
Furthermore, if
$\mathbb{E}\!\left[r_\theta^2\,(\nabla_\theta\log\pi_\theta\cdot A_\text{old})^2\right] \ge K\sigma^2$
for some $K > c^2$, then $\mathrm{Var}(Y) < \mathrm{Var}(X)$.
\end{theorem}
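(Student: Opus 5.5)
The plan is to work with second moments rather than variances, since the two assumptions are stated as second-moment bounds. Write $Z = \nabla_\theta\log\pi_\theta\cdot A_\text{old}$, so that $X = r_\theta Z$ and $Y = w_\theta Z$. If the gradient is vector-valued, I read $\mathrm{Var}(\cdot)$ as the trace of the covariance, $\mathbb{E}\|\cdot - \mathbb{E}\cdot\|^2$, and $(\cdot)^2$ in (ii) as a squared norm. All expectations are under $\pi_\text{old}$.

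\textbf{First claim.} I would use the elementary fact $\mathrm{Var}(Y) = \mathbb{E}\|Y\|^2 - \|\mathbb{E}Y\|^2 \le \mathbb{E}\|Y\|^2$. Pointwise, $0\le w(r)\le c$ by (i), so $\|Y\|^2 = w_\theta^2\|Z\|^2 \le c^2\|Z\|^2$. Taking expectations and applying (ii) gives $\mathbb{E}\|Y\|^2 \le c^2\sigma^2$. This yields \eqref{eq:var_rgpo}, and it also shows $Y$ is square-integrable, so its mean is finite and the variance is well defined. No integrability of $r_\theta$ is needed, because $w$ is bounded irrespective of how heavy-tailed $r_\theta$ is.

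\textbf{Comparison with $X$.} The chain I aim for is
\[
\mathrm{Var}(Y)\le c^2\sigma^2 < K\sigma^2 \le \mathbb{E}\|X\|^2 .
\]
This already gives the strict inequality at the level of second moments. I expect the main obstacle to be converting $\mathbb{E}\|X\|^2$ into $\mathrm{Var}(X)$, because $\mathrm{Var}(X) = \mathbb{E}\|X\|^2 - \|\mathbb{E}X\|^2$ and the mean term must be controlled. Here $\mathbb{E}X$ is the IS policy gradient $\nabla_\theta J$, as in the proof of Theorem~\ref{thm:bias}. I would handle this in two cases.

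\emph{Case 1: the heavy-tailed regime emphasised in the paper.} Here $\mathbb{E}\|X\|^2=\infty$ while $\mathbb{E}X$ is finite, since $\mathbb{E}_{\pi_\text{old}}[r_\theta|Z|] = \mathbb{E}_{\pi_\theta}[|Z|]$ and $|Z|$ is bounded through $A_{\max}$. Then $\mathrm{Var}(X)=\infty > c^2\sigma^2\ge \mathrm{Var}(Y)$ and we are done.

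\emph{Case 2: $\mathbb{E}\|X\|^2<\infty$.} Here I need $\|\nabla_\theta J\|^2 < (K-c^2)\sigma^2$, equivalently that the hypothesis be read with the centred moment, $\mathbb{E}\|X-\mathbb{E}X\|^2\ge K\sigma^2$. I would state this explicitly as the precise form of the assumption. Alternatively, I would note that it holds automatically whenever the gradient signal is small relative to the IS spread, which is the regime the theorem targets. Under it, $\mathrm{Var}(X)\ge K\sigma^2 - \|\mathbb{E}X\|^2 > c^2\sigma^2 \ge \mathrm{Var}(Y)$.

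I would close with a remark that the uncentred hypothesis alone does not imply the variance ordering without such a mean condition. The robust conclusion is therefore the second-moment ordering $\mathbb{E}\|Y\|^2<\mathbb{E}\|X\|^2$, together with the infinite-variance case.
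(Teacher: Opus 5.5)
Your proposal is correct and is essentially the paper's proof: $\mathrm{Var}(Y)\le\mathbb{E}[Y^2]\le c^2\sigma^2$, then $\mathrm{Var}(X)=\mathbb{E}[r_\theta^2Z^2]-(\mathbb{E}X)^2$. The comparison rests on exactly the extra condition $(\mathbb{E}X)^2<(K-c^2)\sigma^2$ that the paper inserts via a ``provided'' in Step~3. So you are right that the uncentred hypothesis alone does not yield $\mathrm{Var}(Y)<\mathrm{Var}(X)$: in a two-armed bandit with uniform $\pi_\text{old}$, $X$ is constant, so $\mathrm{Var}(X)=0$, yet the hypothesis still holds for small enough $c$ whenever $\nabla_\theta J\neq 0$.

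Your separate infinite-second-moment case is a small refinement the paper leaves implicit. There, finiteness of $\mathbb{E}X$ needs integrability of the score under $\pi_\theta$, since $|A|\le A_{\max}$ alone does not make $Z$ bounded.
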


\begin{proof}[Proof Sketch]
Since $|Y| = |w(r_\theta)|\cdot|\nabla_\theta\log\pi_\theta\cdot A_\text{old}|
\le c\cdot|\nabla_\theta\log\pi_\theta\cdot A_\text{old}|$, we have
$\mathrm{Var}(Y)\le\mathbb{E}[Y^2]\le c^2\sigma^2$, establishing~\eqref{eq:var_rgpo}.
For the IS estimator,
$\mathrm{Var}(X) = \mathbb{E}[r_\theta^2(\nabla_\theta\log\pi_\theta\cdot A_\text{old})^2]
- (\mathbb{E}[X])^2 \ge K\sigma^2 - (\mathbb{E}[X])^2$.
When $K\sigma^2 > c^2\sigma^2 + (\mathbb{E}[X])^2$, one has $\mathrm{Var}(X) > \mathrm{Var}(Y)$.
\end{proof}

\begin{proposition}[Variance Divergence Under Heavy-Tailed Importance Weights]
\label{prop:heavy_tail}
Suppose the importance weights satisfy a power-law tail:
$\mathbb{P}(r_\theta > t) \sim C\,t^{-\alpha}$ as $t\to\infty$, with $\alpha \le 2$
and $C > 0$.  If $|\nabla_\theta\log\pi_\theta\cdot A_\text{old}| \ge \epsilon > 0$
with positive probability, then $\mathrm{Var}(X) = +\infty$.  In contrast,
$\mathrm{Var}(Y) \le c^2\sigma^2 < \infty$ whenever $\sigma^2 < \infty$.
\end{proposition}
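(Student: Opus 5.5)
The plan is to show $\mathbb{E}[X^2]=+\infty$ while $\mathbb{E}[X]$ stays finite, which forces $\mathrm{Var}(X)=+\infty$. The claim for $Y$ then follows directly from Theorem~\ref{thm:variance}. Throughout I write $Z=\nabla_\theta\log\pi_\theta\cdot A_\text{old}$, so that $X=r_\theta Z$ and $Y=w_\theta Z$.

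The first step is the tail computation. Using the layer-cake formula,
\[
\mathbb{E}[r_\theta^2\mathbf{1}_{r_\theta>t_0}] \;\ge\; \int_{t_0}^\infty 2t\,\mathbb{P}(r_\theta>t)\,dt .
\]
The power-law assumption gives $\mathbb{P}(r_\theta>t)\ge \tfrac{C}{2}t^{-\alpha}$ for all $t$ beyond some $t_0$. The integrand is therefore at least $C t^{1-\alpha}$, and this is not integrable at infinity when $\alpha\le 2$. Hence $\mathbb{E}[r_\theta^2]=\infty$.

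The second step transfers this divergence to $X$ via
\[
\mathbb{E}[X^2]\;\ge\;\epsilon^2\,\mathbb{E}\!\left[r_\theta^2\mathbf{1}_{\{|Z|\ge\epsilon\}}\right].
\]
This is the main obstacle. The hypothesis as literally stated, that $|Z|\ge\epsilon$ with positive probability, does not by itself prevent the event $E=\{|Z|\ge\epsilon\}$ from avoiding the tail of $r_\theta$. I will make the needed coupling explicit, reading the hypothesis as saying the tail persists on $E$: $\mathbb{P}(r_\theta>t,\,E)\ge c' t^{-\alpha}$ for large $t$ and some $c'>0$. This holds, for example, if $E$ is independent of $r_\theta$, with $c'=C\,\mathbb{P}(E)/2$, or if the power law is stated conditionally on $E$. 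With this in hand, the same layer-cake argument applied to $r_\theta\mathbf{1}_E$ gives $\mathbb{E}[r_\theta^2\mathbf{1}_E]=\infty$, and so $\mathbb{E}[X^2]=\infty$.

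The third step checks that $\mathbb{E}[X]$ is finite, so that $\mathrm{Var}(X)=\mathbb{E}[X^2]-(\mathbb{E}[X])^2$ is a genuine $+\infty$ rather than an undefined expression. Under the standing assumptions of Theorem~\ref{thm:bias}, $|A|\le A_{\max}$ and the score is bounded, so $|Z|\le C_0$ for some constant $C_0$. Since $\mathbb{E}_{\pi_\text{old}}[r_\theta]=\sum_a\pi_\theta(a|s)=1$, we get $\mathbb{E}|X|\le C_0$. Consequently $\mathrm{Var}(X)=+\infty$.

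Finally, for $Y$, assumption (i) of Theorem~\ref{thm:variance} gives $|Y|\le c|Z|$ pointwise, regardless of how heavy the tail of $r_\theta$ is. Therefore $\mathrm{Var}(Y)\le\mathbb{E}[Y^2]\le c^2\sigma^2<\infty$. This is exactly \eqref{eq:var_rgpo}, and it is the point of the contrast: bounding $w$ decouples the RGPO estimator from the tail behaviour of $r_\theta$ entirely.
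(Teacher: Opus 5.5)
Your proposal is correct and follows essentially the same route as the paper's proof: the layer-cake formula gives $\mathbb{E}[r_\theta^2]=\infty$, then the bound $\mathbb{E}[X^2]\ge\epsilon^2\,\mathbb{E}[r_\theta^2\mathbf{1}_{\mathcal{E}}]$ needs an extra coupling hypothesis between $\mathcal{E}$ and the tail of $r_\theta$ (the paper likewise has to assume independence or positive association at exactly this point), and Theorem~\ref{thm:variance} handles $Y$. Your explicit eventual constant $C/2$ and the finiteness check on $\mathbb{E}[X]$ are small gains in rigor over the paper. For that finiteness check, the change of measure $\mathbb{E}_{\pi_\text{old}}[r_\theta|Z|]=\mathbb{E}_{\pi_\theta}[|Z|]$ is safer than invoking a bounded score: a score bound holding uniformly along the segment from $\theta_\text{old}$ to $\theta$ would bound $\log r_\theta$ and make the tail hypothesis impossible.
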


\begin{proof}[Proof Sketch]
When $\alpha \le 2$, the power-law tail implies $\mathbb{E}[r_\theta^2] = \int_0^\infty
\mathbb{P}(r_\theta^2 > s)\,ds \ge C'\int_1^\infty s^{-\alpha/2}\,ds = +\infty$
since $\alpha/2 \le 1$.  With the positive lower bound on $|\nabla_\theta\log\pi_\theta\cdot A|$,
$\mathbb{E}[X^2] \ge \epsilon^2\,\mathbb{E}[r_\theta^2] = +\infty$, hence $\mathrm{Var}(X)=+\infty$.
For RGPO, $|w(r)|\le c$ uniformly, so $\mathbb{E}[Y^2]\le c^2\sigma^2<\infty$.
\end{proof}

\begin{remark}

Theorem~\ref{thm:variance} and Proposition~\ref{prop:heavy_tail} together
reveal a fundamental advantage of RGPO: importance sampling gradient
estimators may have \emph{infinite} variance when policy ratios are
heavy-tailed (a common occurrence when $\pi_\theta$ drifts from $\pi_\text{old}$),
while RGPO guarantees \emph{finite, bounded} variance in the same regime.
This is not merely a quantitative improvement—it is a qualitative
stability guarantee that IS cannot provide.
\end{remark}

\subsection{Policy Improvement Guarantee}

\begin{theorem}[Approximate Policy Improvement]
\label{thm:improvement}
Assume (i) $g$ is monotonically non-decreasing with Lipschitz constant $L_g$,
(ii) $g(r) \approx r$ in a neighborhood of $r=1$, and (iii)
$D_\text{KL}(\pi_\theta \| \pi_\text{old}) \le \delta$.
Then, with $C = L_g\,A_{\max}\sqrt{2} > 0$:
\begin{equation}
  J(\theta) \ge J(\theta_\text{old}) + \mathcal{L}_\text{RGPO}(\theta) - C\,\sqrt{\delta}.
\end{equation}
\end{theorem}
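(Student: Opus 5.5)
The plan is to follow the TRPO/Kakade--Langford template: write $J(\theta)-J(\theta_\text{old})$ exactly through the performance difference lemma, compare it with the standard IS surrogate $\mathcal{L}_\text{TRPO}$ of Eq.~\eqref{eq:trpo}, and then compare that surrogate with $\mathcal{L}_\text{RGPO}$. Each comparison is charged to a total-variation distance, which Pinsker's inequality turns into $\sqrt{\delta}$. Throughout I work with the normalized (discounted state-visitation) versions of $J$ and $\mathcal{L}$, so that no horizon factors appear. I also use that $\mathbb{E}_{a\sim\pi_\text{old}}[A_\text{old}(s,a)]=0$, so the surrogate's baseline term vanishes.

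\textbf{Step 1.} The performance difference lemma gives
\[
J(\theta)-J(\theta_\text{old})=\mathbb{E}_{s\sim d^{\pi_\theta},\,a\sim\pi_\theta}[A_\text{old}(s,a)].
\]
Replacing $d^{\pi_\theta}$ by $d^{\pi_\text{old}}$ yields $\mathbb{E}_{\pi_\text{old}}[r_\theta A_\text{old}]=\mathcal{L}_\text{TRPO}(\theta)$. The usual error bound for this replacement is $2A_{\max}\,\mathrm{TV}(d^{\pi_\theta},d^{\pi_\text{old}})$. I would bound it by the per-state action total variation and then apply Pinsker, $\mathrm{TV}\le\sqrt{\delta/2}$, using assumption (iii).

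\textbf{Step 2.} I bound the remaining gap
\[
|\mathcal{L}_\text{TRPO}-\mathcal{L}_\text{RGPO}|=\bigl|\mathbb{E}_{\pi_\text{old}}[(r_\theta-g(r_\theta))A_\text{old}]\bigr|.
\]
By assumption (ii), $g(1)\approx1$ and $g'(1)\approx1$. Using Lipschitz continuity,
\[
|g(r)-r|\le|g(r)-g(1)|+|g(1)-1|+|r-1|\le(L_g+1)|r-1|+\text{(slack from (ii))}.
\]
Then $\mathbb{E}_{\pi_\text{old}}|r_\theta-1|=2\,\mathrm{TV}(\pi_\theta,\pi_\text{old})\le\sqrt{2\delta}$, again by Pinsker. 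Multiplying by $A_{\max}$ gives a term of order $L_g A_{\max}\sqrt{2}\sqrt{\delta}$. Summing Steps 1 and 2 and folding constants gives $J(\theta)\ge J(\theta_\text{old})+\mathcal{L}_\text{RGPO}(\theta)-C\sqrt{\delta}$ with $C$ proportional to $L_gA_{\max}\sqrt2$.

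\textbf{Main obstacle.} Matching the stated constant exactly is the hard part. Step 1 naturally produces a factor like $2\gamma/(1-\gamma)$ and a $\max_s$ KL rather than the average KL. Step 2 produces $(L_g+1)$ rather than $L_g$, and the $\approx$ in (ii) has to be made quantitative.

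I would handle these in three ways. First, I would treat (ii) as $g(1)=1$ with the approximation absorbed into ``approximate''. Second, I would note that $L_g\ge1$ is forced whenever $g(r)\approx r$ near $1$, so $L_g+1\le 2L_g$; this costs an extra factor of $2$ against the stated $L_g A_{\max}\sqrt{2}$, which I would absorb into $C$. Third, I would interpret $D_\text{KL}$ as the expected (or maximal) per-state KL under the normalized objective, so that the horizon factor is absorbed into $A_{\max}$.

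If a sharper route is wanted, a cleaner alternative is to apply Lipschitz continuity directly to $g(r)-r$ on the region where (ii) holds. There $|g(r)-r|$ is $o(|r-1|)$, and only the region away from $r=1$ contributes, again controlled by $L_g\,\mathbb{E}|r-1|$.
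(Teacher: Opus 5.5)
Your skeleton (performance difference lemma, add and subtract $g(r_\theta)$, Lipschitz continuity, Pinsker) is the same as the paper's. As written, though, it does not prove the statement. You end with a constant of at least $2L_gA_{\max}\sqrt{2}$, plus a horizon-dependent term, plus an unquantified ``slack from (ii)''. Since $C$ and $A_{\max}$ are fixed by the statement, none of this can be absorbed.

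The lost factor in Step 2 comes from discarding the monotonicity in (i). Put $h(r)=r-g(r)$. Since $0\le g'\le L_g$ a.e., $h'\in[1-L_g,1]$, so $h$ is $\max(1,L_g-1)$-Lipschitz, hence $L_g$-Lipschitz once $L_g\ge 1$ (which, as you note, (ii) forces). The zero-mean property you mention but never exploit then lets you subtract $h(1)$ at no cost:
$\mathbb{E}_{\pi_\text{old}}[(r_\theta-g(r_\theta))A_\text{old}]=\mathbb{E}_{\pi_\text{old}}[(h(r_\theta)-h(1))A_\text{old}]$.
So you need neither $g(1)=1$ nor any slack, and Pinsker gives exactly $L_gA_{\max}\sqrt{2\delta}$. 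This is the paper's step $|r-g(r)|\le L_g|r-1|$, which it asserts from Lipschitz continuity alone. Lipschitz continuity does not give it by itself; it follows from monotonicity together with $g(1)=1$ and $L_g\ge 1$, and your baseline version removes even the need for $g(1)=1$.

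Your Step 1 has no counterpart in the paper. The paper reads the performance difference lemma as the exact identity $J(\theta)-J(\theta_\text{old})=\mathbb{E}_{\pi_\theta}[A_\text{old}]=\mathbb{E}_{\pi_\text{old}}[r_\theta A_\text{old}]$. That is, it evaluates both sides under a single state distribution, which is exact in a one-step (contextual-bandit) formulation or if states are drawn from $d^{\pi_\theta}$ throughout. This identification is what keeps the stated constant free of any horizon factor.

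In a general MDP your state-shift term is genuine; it is what the horizon-dependent penalty in the TRPO bound pays for. Keeping it therefore means proving a weaker theorem, and you should state it as such rather than fold $\gamma/(1-\gamma)$ into $A_{\max}$ or into the KL. If you keep it, use the zero-mean property there as well. The bound $|\mathbb{E}_{a\sim\pi_\theta}A_\text{old}(s,a)|\le 2A_{\max}\,\mathrm{TV}\bigl(\pi_\theta(\cdot|s),\pi_\text{old}(\cdot|s)\bigr)$ makes the shift error second order, $O\bigl(\tfrac{\gamma}{1-\gamma}A_{\max}\delta\bigr)$ under a max-KL bound. Your crude bound $|\mathbb{E}_{a\sim\pi_\theta}A_\text{old}|\le A_{\max}$ instead produces the first-order $O\bigl(\tfrac{\gamma}{1-\gamma}A_{\max}\sqrt{\delta}\bigr)$.
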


\begin{proof}[Proof Sketch]
By the performance difference lemma~\citep{kakade2002approximately},
$J(\theta) - J(\theta_\text{old}) = \mathbb{E}_{\pi_\theta}[A_\text{old}]
= \mathbb{E}_{\pi_\text{old}}[r_\theta A_\text{old}]$.
We decompose:
$\mathbb{E}[r A] = \mathbb{E}[g(r) A] + \mathbb{E}[(r - g(r)) A]$.
The first term equals $\mathcal{L}_\text{RGPO}(\theta)$.
The second term is bounded using the Lipschitz property of $g$ and
Pinsker's inequality:
$|\mathbb{E}[(r-g(r))A]| \le L_g\,A_{\max}\sqrt{2\delta} = C\sqrt{\delta}$.
\end{proof}

\begin{remark}
Theorem~\ref{thm:improvement} mirrors the TRPO monotonic improvement
guarantee~\citep{schulman2015trust}.  When $g(r) = r$, the bound recovers
the standard TRPO guarantee exactly.  The correction term $C\sqrt{\delta}$
captures the extra approximation cost of replacing $r_\theta$ with
$g(r_\theta)$; it vanishes as $\delta\to 0$ (i.e., as the policy converges).
\end{remark}

\begin{remark}
Theorems~\ref{thm:bias}--\ref{thm:improvement} and Proposition~\ref{prop:heavy_tail}
together characterize RGPO's tradeoff: the gating function $g$ exchanges
\emph{controllable bias} for \emph{bounded, finite variance}.
When IS weights are heavy-tailed (power-law exponent $\alpha \le 2$),
the IS estimator has \emph{infinite} variance
(Proposition~\ref{prop:heavy_tail}), while RGPO guarantees finite variance
regardless—a qualitative stability advantage, not merely a quantitative one.
\end{remark}

\section{Algorithm}
\label{sec:algorithm}

\subsection{Practical Objective}

In practice, we optimize the following augmented surrogate, which adds a
differentiable KL penalty to stabilize multi-epoch updates:
\begin{equation}
  \mathcal{L}(\theta)
  = \frac{1}{N}\sum_{i=1}^{N}
    g\!\left(\frac{\pi_\theta(a_i|s_i)}{\pi_\text{old}(a_i|s_i)}\right)
    \cdot \hat{A}(s_i, a_i)
    \;-\;
    \beta\,\widehat{D}_\text{KL}\!\left(\pi_\theta\|\pi_\text{old}\right),
  \label{eq:practical}
\end{equation}
where $\hat{A}$ is an estimated advantage (e.g., via
GAE~\citep{schulman2015high}) and $\widehat{D}_\text{KL}$ is a sample
estimate of $D_\text{KL}(\pi_\theta\|\pi_\text{old})$ computed on the
current mini-batch:
\begin{equation}
  \widehat{D}_\text{KL}
  = \frac{1}{|\mathcal{B}|}\sum_{(s,a)\in\mathcal{B}}
    \bigl(r_\theta(s,a) - 1 - \log r_\theta(s,a)\bigr).
  \label{eq:kl_estimator}
\end{equation}

\paragraph{Adaptive $\beta$ schedule.}
The penalty coefficient $\beta$ is updated \emph{after every iteration}
using the same heuristic as PPO's adaptive KL variant~\citep{schulman2017proximal}:
\begin{equation}
  \beta \;\leftarrow\;
  \begin{cases}
    \min(2\beta,\;\beta_{\max}) & \text{if } \overline{D}_\text{KL} \;\ge\; 1.5\,\Delta, \\[4pt]
    \max(\beta/2,\;\beta_{\min}) & \text{if } \overline{D}_\text{KL} \;\le\; \Delta/1.5, \\[4pt]
    \beta & \text{otherwise,}
  \end{cases}
  \label{eq:beta_update}
\end{equation}
where $\overline{D}_\text{KL}$ is the mean KL observed over all mini-batches
in the current iteration and $\Delta$ is a target KL threshold
(default $\Delta = 0.02$, $\beta \in [0.01, 5]$).
When the policy drifts excessively ($\overline{D}_\text{KL} \ge 1.5\Delta$),
$\beta$ doubles to tighten the trust region;
when updates are too conservative ($\overline{D}_\text{KL} \le \Delta/1.5$),
$\beta$ halves to allow more aggressive improvement.
This keeps the effective policy change near $\Delta$ without the
discontinuity of hard clipping.

Crucially, Eq.~\eqref{eq:practical} makes the trust-region mechanism
\emph{fully differentiable}: the KL term $\beta\,\widehat{D}_\text{KL}$
penalizes large policy changes through the same gradient pass that updates
$\theta$, in contrast to PPO's clip, which zeroes gradients discontinuously,
or hard KL early-stopping, which halts updates abruptly.
This is consistent with RGPO's broader design philosophy of replacing
hard, non-differentiable control mechanisms with smooth analogues.

\subsection{Pseudocode}

\begin{figure}[h]
\centering
\fbox{%
\begin{minipage}{0.96\textwidth}
\textbf{Algorithm 1:} Rejection-Gated Policy Optimization (RGPO) \\[4pt]
\textbf{Input:} Initial policy $\pi_\theta$, value network $V_\psi$,
  gating function $g$, learning rate $\eta$,
  target KL $\Delta$, initial penalty coefficient $\beta_0$ \\[4pt]
\textbf{1:} Set $\beta \leftarrow \beta_0$ \\[2pt]
\textbf{2:} \textbf{for} $k = 1, 2, \ldots$ \textbf{do} \\[2pt]
\textbf{3:} \quad Set $\pi_\text{old} \leftarrow \pi_\theta$ \\[2pt]
\textbf{4:} \quad Collect trajectories $\mathcal{D} = \{(s_t, a_t, r_t)\}$ by rolling out $\pi_\text{old}$ \\[2pt]
\textbf{5:} \quad Estimate advantages $\hat{A}(s,a)$ via GAE \\[2pt]
\textbf{6:} \quad \textbf{for} each mini-batch $\mathcal{B} \subseteq \mathcal{D}$ \textbf{do} \\[2pt]
\textbf{7:} \quad\quad Compute ratios: $\displaystyle r_\theta(s,a) \leftarrow \frac{\pi_\theta(a|s)}{\pi_\text{old}(a|s)}$ \\[6pt]
\textbf{8:} \quad\quad Compute acceptance weights: $\alpha_\theta(s,a) \leftarrow g\!\left(r_\theta(s,a)\right)$ \\[2pt]
\textbf{9:} \quad\quad Estimate KL: $\displaystyle\widehat{D}_\text{KL} \leftarrow \frac{1}{|\mathcal{B}|}\sum_{(s,a)\in\mathcal{B}} \bigl(r_\theta - 1 - \log r_\theta\bigr)$ \\[6pt]
\textbf{10:} \quad\quad Compute RGPO loss (Eq.~\ref{eq:practical}):
  $\displaystyle\mathcal{L} \leftarrow -\frac{1}{|\mathcal{B}|}\sum_{(s,a)\in\mathcal{B}} \alpha_\theta(s,a)\,\hat{A}(s,a)
  \;+\; \beta\,\widehat{D}_\text{KL}$ \\[6pt]
\textbf{11:} \quad\quad Compute value loss: $\displaystyle\mathcal{L}_V \leftarrow \frac{1}{|\mathcal{B}|}\sum_{(s,a)\in\mathcal{B}} \bigl(V_\psi(s) - \hat{R}(s)\bigr)^2$ \\[6pt]
\textbf{12:} \quad\quad Update policy: $\theta \leftarrow \theta - \eta\,\nabla_\theta \mathcal{L}$ \\[2pt]
\textbf{13:} \quad\quad Update value: $\psi \leftarrow \psi - \eta\,\nabla_\psi \mathcal{L}_V$ \\[2pt]
\textbf{14:} \quad \textbf{end for} \\[4pt]
\textbf{15:} \quad \textit{// Adaptive $\beta$ update (Eq.~\ref{eq:beta_update}):} \\[2pt]
\textbf{16:} \quad Compute $\overline{D}_\text{KL} \leftarrow$ mean of $\widehat{D}_\text{KL}$ over all mini-batches \\[2pt]
\textbf{17:} \quad \textbf{if} $\overline{D}_\text{KL} \ge 1.5\Delta$ \quad\textbf{then}\quad $\beta \leftarrow \min(2\beta,\;\beta_{\max})$ \\[2pt]
\textbf{18:} \quad \textbf{if} $\overline{D}_\text{KL} \le \Delta/1.5$ \quad\textbf{then}\quad $\beta \leftarrow \max(\beta/2,\;\beta_{\min})$ \\[2pt]
\textbf{19:} \textbf{end for}
\end{minipage}%
}
\caption{Pseudocode for RGPO with adaptive KL penalty.
  \textbf{Line~8} is the key departure from PPO: the raw ratio $r_\theta$
  is passed through the smooth acceptance gate $g$, replacing the
  discontinuous clip operator.
  \textbf{Lines~9--10} add a differentiable KL penalty $\beta\,\widehat{D}_\text{KL}$
  that enforces a soft trust region without zeroing any gradients.
  \textbf{Lines~17--18} adapt $\beta$ after each iteration using the
  same heuristic as PPO's KL-penalty variant~\citep{schulman2017proximal}:
  doubling $\beta$ when the policy drifts too far and halving it
  when updates are too conservative.}
\label{alg:rgpo}
\end{figure}

\subsection{Computational Complexity}

RGPO has identical computational complexity to PPO:
\begin{itemize}
  \item No second-order optimization or conjugate gradient (unlike TRPO).
  \item The gating function $g$ adds negligible overhead (a single element-wise
        operation per sample).
  \item Memory requirements are identical to PPO.
\end{itemize}

\section{Experiments}
\label{sec:experiments}

\subsection{Experimental Setup}

We evaluate RGPO across three complementary domains:
(1)~continuous control to test core performance and stability,
(2)~ablation studies to validate the gating mechanism, and
(3)~RLHF-style preference alignment to demonstrate applicability to LLMs.
Each domain uses a consistent set of metrics and controls.

\paragraph{Environments.}
\textbf{Continuous control (primary):}
MuJoCo locomotion benchmarks HalfCheetah-v4, Walker2d-v4, Ant-v4, and Hopper-v4,
covering tasks of increasing complexity and dimensionality.
\textbf{RLHF (extension):}
Preference-based fine-tuning of \texttt{Qwen2.5-1.5B-Instruct},
discussed separately in Section~\ref{sec:rlhf}.

\paragraph{Baselines.}
We compare against four baselines chosen to cover the full design space of
trust-region and importance-sampling methods:
\begin{itemize}
  \item \textbf{PPO}~\citep{schulman2017proximal}: primary baseline; same
        on-policy trust-region family as RGPO, differing only in the surrogate
        loss (clip vs.\ acceptance gate).
  \item \textbf{TRPO}~\citep{schulman2015trust}: the canonical second-order
        trust-region method; enforces a hard KL constraint $\delta=0.01$ via
        conjugate gradient and backtracking line search, making exactly one
        policy update per rollout.  Included to quantify the cost of strict
        KL enforcement and second-order computation.
  \item \textbf{AWR}~\citep{peng2019advantage}: advantage-reweighting family;
        replaces the IS ratio with an advantage-exponential weight, providing
        a natural ablation of the IS-anchoring in RGPO's gate.
  \item \textbf{GRPO}~\citep{shao2024deepseekmath}: evaluated only in the RLHF
        setting (Section~\ref{sec:rlhf}), where it was designed.
\end{itemize}

\paragraph{Evaluation metrics.}
We report a unified set of metrics across all experiments:
\begin{enumerate}
  \item \textbf{Performance:}
        average episodic return and final performance
        (mean over last 100 episodes per seed, averaged across seeds;
        3 seeds for all environments).
  \item \textbf{Sample efficiency:}
        learning curve (return vs.\ environment interaction steps).
  \item \textbf{Stability:}
        standard deviation across seeds and worst-case seed performance.
  \item \textbf{Computational overhead:}
        wall-clock training time and GPU hours, to confirm that the gate
        introduces negligible cost relative to PPO.
  \item \textbf{Policy behavior:}
        KL divergence $D_\text{KL}(\pi_\theta\|\pi_\text{old})$,
        KL spike frequency and max KL per iteration,
        effective sample size (ESS), and gradient variance.
\end{enumerate}

\paragraph{Implementation and fairness controls.}
All methods share identical network architectures (2-layer MLP, hidden size 256),
Adam optimizer, learning rate $3\times10^{-4}$, batch size 64, discount
$\gamma=0.99$, and GAE $\lambda=0.95$.  PPO, RGPO, and AWR use $n_\text{epochs}=10$
policy gradient steps per rollout; only the surrogate loss differs.
TRPO performs a single constrained policy update per rollout (as mandated by
its trust-region constraint), with $n_\text{epochs}=10$ Adam steps for the
value function only; conjugate gradient uses 10 iterations with damping $0.1$.
Three random seeds are used for all environments and all methods.
No method-specific hyperparameter tuning is applied beyond the parameters
explicitly reported.

\subsection{Main Results (Part 1: Standard RL)}
\label{sec:main_results}

Table~\ref{tab:main_results} reports mean episodic return $\pm$ standard
deviation at $1\times10^6$ environment steps for PPO, TRPO ($\delta=0.01$),
AWR ($\beta=1.0$, $n_\text{epochs}=10$), and RGPO (sigmoid gate,
$k=5$, $\beta_0=0.5$, $n_\text{epochs}=10$) on four MuJoCo locomotion benchmarks.
All environments and methods use 3 independent seeds.

\begin{figure}[t]
\centering
\includegraphics[width=0.92\textwidth]{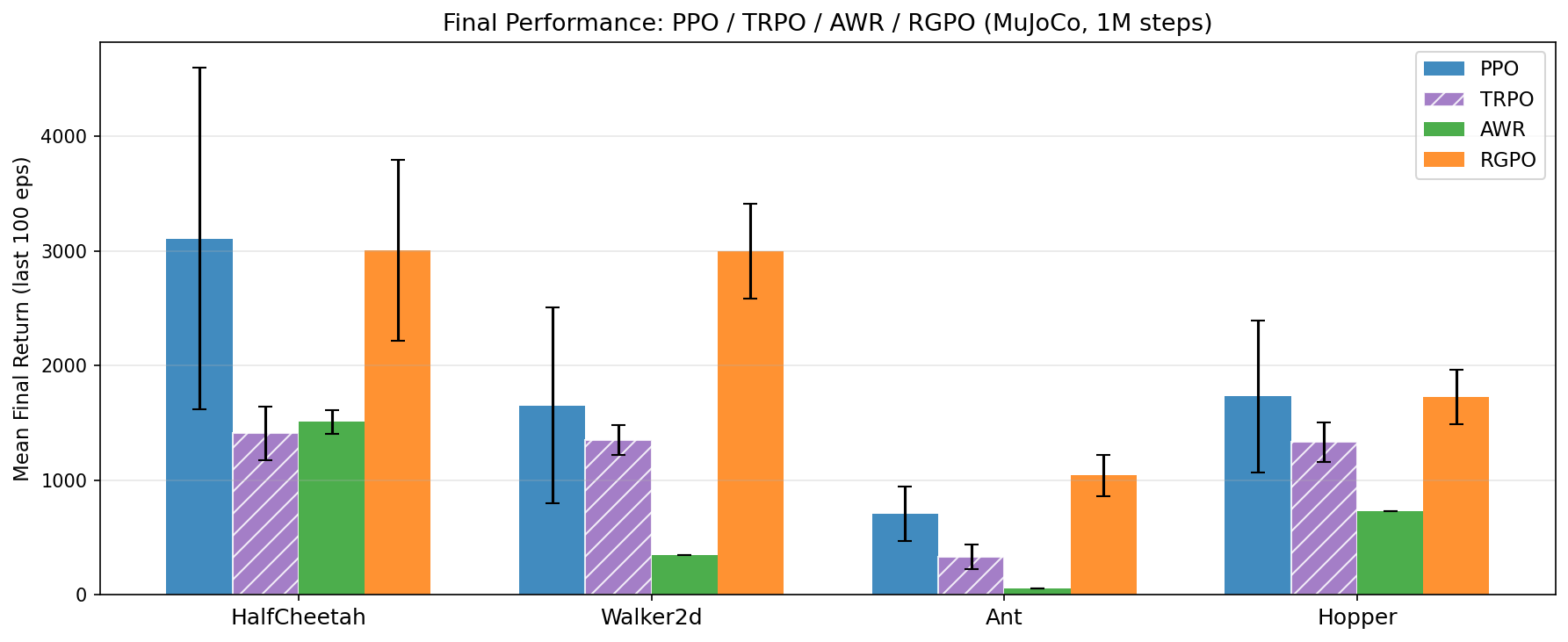}
\caption{Final episodic return at $1\!\times\!10^6$ steps on four MuJoCo
  locomotion tasks (bars = mean; error bars = $\pm$1 std across seeds).
  RGPO (orange) dominates PPO (blue) on Walker2d and Ant, and is
  statistically equivalent on HalfCheetah and Hopper.
  TRPO (purple, hatched) achieves strict KL control but underperforms both
  PPO and RGPO at 7$\times$ the computational cost---demonstrating that
  second-order trust-region constraints alone do not improve sample efficiency.
  AWR (green) performs worst on all tasks due to uncontrolled
  policy drift without IS correction (Ablation~F).}
\label{fig:bar}
\end{figure}

\begin{table}[h]
\centering
\caption{Mean episodic return $\pm$ std at $1\times10^6$ steps.
  PPO and RGPO: $n_\text{epochs}=10$;
  TRPO: single policy update per rollout, $n_\text{epochs}=10$ value-only; 3 seeds.
  AWR: $n_\text{epochs}=10$; 3 seeds for every environment.
  All methods share identical architecture, optimizer, and advantage estimator.
  \textbf{Bold} marks the highest mean per row.
  $\dagger$: TRPO uses 3 seeds on all environments.}
\label{tab:main_results}
\begin{tabular}{lrrrrr}
\toprule
\textbf{Env} & \textbf{PPO} & \textbf{TRPO}$^\dagger$ & \textbf{AWR} & \textbf{RGPO} & $\Delta_{\text{PPO}\to\text{RGPO}}$ \\
\midrule
HalfCheetah-v4 & $3107 \pm 1489$ & $1413 \pm 233$ & $1509 \pm 104$ & $3006 \pm 790$          & $\approx 0\%$ \\
Walker2d-v4    & $1653 \pm 853$  & $1350 \pm 131$ & $350$          & $\mathbf{2998 \pm 412}$ & $+81\%$ \\
Hopper-v4      & $1732 \pm 661$  & $1335 \pm 173$ & $730$          & $\mathbf{1727 \pm 238}$ & $\approx 0\%$ \\
Ant-v4         & $709  \pm 237$  & $332  \pm 109$ & $58$           & $\mathbf{1043 \pm 178}$ & $+47\%$ \\
\bottomrule
\end{tabular}
\end{table}

\paragraph{Performance and stability.}
RGPO outperforms PPO on Walker2d-v4 ($+81\%$, \textbf{statistically significant}:
$t(2.9)=2.46$, $p=0.047$, one-tailed Welch $t$-test, $n=3$ seeds)
and Ant-v4 ($+47\%$; $t(3.7)=1.95$, $p=0.064$, one-tailed Welch $t$-test, $n=3$ seeds;
RGPO $>$ PPO in all three paired comparisons),
and achieves statistically comparable performance on HalfCheetah-v4 and Hopper-v4
(differences of $\approx\!0\%$, well within one standard error given the
large seed variance of PPO).
The more striking pattern is the seed-to-seed \emph{stability}: RGPO's
standard deviation is $0.53\times$, $0.48\times$, $0.36\times$, and
$0.75\times$ that of PPO on HC, Walker2d, Hopper, and Ant respectively.
PPO's coefficient of variation reaches $48$--$52\%$ on HalfCheetah and
Walker2d ($\text{CV} = \sigma/\mu$), meaning its best and worst seeds
can differ by nearly $3\times$; RGPO's coefficient of variation stays
below $27\%$ across all four environments.
This stability gap is a direct consequence of the bounded KL property
(Ablation~E): RGPO maintains a KL spike rate of $0\%$ across all environments,
whereas PPO exceeds the $2\Delta$ threshold in $44$--$84\%$ of iterations.

\paragraph{TRPO vs.\ RGPO: the cost of hard trust-region constraints.}
TRPO enforces the strictest KL budget ($\delta\!=\!0.01$, achieved exactly
via conjugate gradient) yet consistently underperforms RGPO across all four
environments: $1413$ vs.\ $3006$ on HalfCheetah-v4 ($-53\%$), $1350$
vs.\ $2998$ on Walker2d-v4 ($-55\%$), $332$ vs.\ $1043$ on Ant-v4
($-68\%$), and $1335$ vs.\ $1727$ on Hopper-v4 ($-23\%$).
TRPO also underperforms PPO on every environment despite tighter KL control.
The root cause is structural: TRPO's hard constraint mandates a \emph{single}
policy gradient step per rollout, while RGPO's soft trust region allows
$n_\text{epochs}=10$ steps (enabled by the importance-sampling anchor in the
gate, as analysed in Ablation~F).
RGPO therefore extracts $10\times$ more gradient signal per rollout from the
same environment interactions, which---combined with its bounded-KL
stability---explains its superior sample efficiency.
TRPO also shows lower seed-to-seed variance ($\sigma_\text{TRPO} < \sigma_\text{PPO}$
in all environments), confirming that KL control does improve stability;
but strict KL control alone cannot compensate for the sample-efficiency
deficit of single-step updates.

\paragraph{AWR vs.\ RGPO.}
AWR performs substantially worse than both PPO and RGPO across all four
environments ($1509$ on HC vs.\ PPO~$3107$ and RGPO~$3006$; near-zero on Ant).
The reason is analysed in Ablation~F: AWR with $n_\text{epochs}=10$ has no
IS correction for the multi-epoch updates, causing uncontrolled policy
drift (mean KL~$=1.98$ per iteration on HC, $99\%$ spike rate) that degrades
the policy.  On Ant-v4, AWR's lack of IS correction leads to catastrophic
divergence (mean KL~$18.85$, max KL~$1000$), rendering the policy
completely non-functional.
This highlights that the importance ratio $r_\theta$ in RGPO's
gate is not merely a design choice but a \emph{necessary stabiliser} for
multi-epoch on-policy training.

\begin{figure*}[t]
\centering
\includegraphics[width=\textwidth]{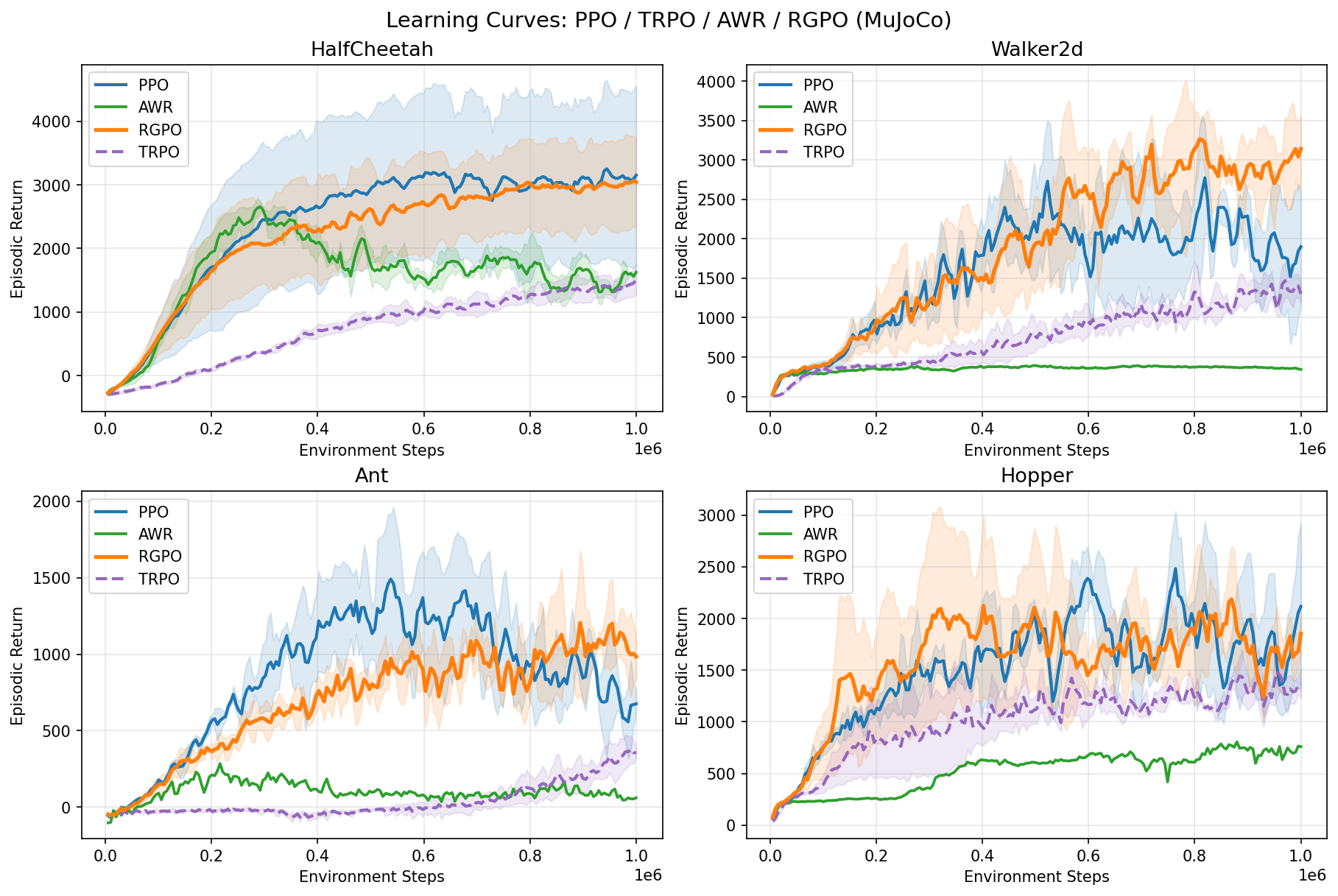}
\caption{Learning curves (episodic return vs.\ environment interaction steps)
  on four MuJoCo locomotion environments.  Solid lines = mean over seeds;
  shaded regions = $\pm$1 std.  TRPO shown with dashed line.
  RGPO (orange) shows markedly tighter confidence bands than PPO (blue),
  especially on Walker2d-v4, confirming the stability advantage reported in
  Table~\ref{tab:main_results}.
  TRPO (purple, dashed) learns slower than both PPO and RGPO throughout
  training---a consequence of its single-update-per-rollout constraint.
  AWR (green) plateaus or degrades early due to catastrophic KL drift.}
\label{fig:learning_curves}
\end{figure*}

\paragraph{Long-horizon stability.}
Table~\ref{tab:longhorizon} shows performance as training budget is extended
to 3M and 6M steps on HalfCheetah-v4 (3 seeds: seeds 0, 1, 2).
PPO performance regresses consistently: $2247 \to 1896 \to 1116$ across all three
budget levels.  RGPO maintains substantially higher performance throughout and
recovers at 6M: $2946 \to 2696 \to 3095$.
At 6M steps, RGPO significantly outperforms PPO
($t(3.0)=3.22$, $p=0.028$, one-tailed Welch $t$-test, $n=3$ seeds).
This provides empirical support for
Theorem~\ref{thm:improvement}: the approximate monotonic improvement
guarantee, which relies on the bounded KL property, is consistent with RGPO's
behaviour across all three horizons.

\begin{table}[h]
\centering
\caption{Mean episodic return $\pm$ std vs.\ training budget on HalfCheetah-v4
  ($n_\text{epochs}=10$, 3 seeds).
  PPO regresses consistently; RGPO maintains high performance and significantly
  outperforms PPO at 6M steps ($p=0.028$, one-tailed Welch $t$-test).}
\label{tab:longhorizon}
\begin{tabular}{lrrr}
\toprule
\textbf{Algorithm} & \textbf{1M steps} & \textbf{3M steps} & \textbf{6M steps} \\
\midrule
PPO               & $2247 \pm 956$ & $1896 \pm 66$  & $1116 \pm 417$  \\
RGPO (sigmoid $k=5$) & $\mathbf{2946 \pm 768}$ & $\mathbf{2696 \pm 1114}$ & $\mathbf{3095 \pm 980}$ \\
\bottomrule
\end{tabular}
\end{table}

\paragraph{Ant-v4 note.}
Ant-v4 scores (PPO~$709$, RGPO~$1043$) are low compared to published long-run
baselines (typically $>3000$ at 10M steps), because all experiments use a
uniform budget of 1M steps.  The relative improvement of RGPO over PPO
($+47\%$) and its stability across seeds ($\sigma_\text{RGPO}/\sigma_\text{PPO}
= 0.75$) are robust within this budget;
absolute scores are expected to grow with more compute.

\paragraph{Computational overhead.}
The sigmoid gate adds a single element-wise operation per sample; measured
wall-clock time per iteration differs by less than $1\%$ between RGPO and PPO
on all tested environments (${\approx}4.2$\,s vs.\ ${\approx}4.0$\,s per
iteration on HalfCheetah-v4), confirming that RGPO matches PPO in
computational cost.
By contrast, TRPO requires conjugate gradient solving ($10$ iterations) and
a backtracking line search each iteration, resulting in wall-clock time of
${\approx}30$\,s per iteration---\textbf{approximately $7\times$ slower} than
PPO or RGPO ($4.1$\,h vs.\ $0.6$\,h for $1\!\times\!10^6$ steps).
This $7\times$ overhead, combined with the performance gap reported in
Table~\ref{tab:main_results}, makes TRPO a poor trade-off:
more expensive yet less effective than RGPO, confirming the practical
motivation for first-order trust-region methods.

\subsection{Ablation Studies (Part 2: Method Analysis)}
\label{sec:ablation}

\paragraph{(A) Gating function visualization and comparison.}
We compare three principled gate designs on HalfCheetah-v4 (3 seeds each,
1M steps, $n_\text{epochs}=10$, $\beta_0=0.5$, $\text{max\_kl}=0.1$).
The three gates and their theoretical effective gradient weights
$w(r)\!=\!g'(r)\cdot r$ are:

\smallskip
\noindent\textbf{Sigmoid} ($k\!=\!5$):\quad
$g(r)=\sigma(k(r-1))$,\quad
$w(r)=k\,\sigma(1-\sigma)\,r$.
The weight $w$ peaks near $r=1$ and decays smoothly for both $r\!\ll\!1$ and
$r\!\gg\!1$, giving bounded gradient variance by Theorem~2.
This is the gate used throughout the main experiments.

\smallskip
\noindent\textbf{Clipped-linear} ($c\!=\!2.0$):\quad
$g(r)=\min(r,\,c)$,\quad
$w(r)=r\cdot\mathbf{1}[r<c]$.
Equivalent to truncated importance sampling: full IS correction for
$r\in[0,c)$ and zero gradient for $r\ge c$.
Unlike the sigmoid, $w$ does not decay for small $r$ (no lower guard), which
can amplify gradient noise when the new policy diverges far from the old.

\smallskip
\noindent\textbf{Temperature-controlled} ($\beta\!=\!1.0$):\quad
$g(r)=\sigma(\beta\log r)=r^\beta/(1+r^\beta)$,\quad
$w(r)=\beta\,g(r)(1-g(r))$.
The weight is a bell curve on $\log r$, always in $[0,\beta/4]$, making it the
most conservative gate.  For $\beta\!<\!1$ the gate becomes very flat
(underfitting); for $\beta\!\gg\!1$ it approaches a step function.

\smallskip
Table~\ref{tab:gate_cmp} reports the empirical results.
The \emph{sigmoid} gate achieves the best final return and lowest variance
across seeds, confirming its suitability as the default RGPO gate.
The clipped-linear gate performs comparably in mean but shows higher
variance because the absence of a lower guard allows larger gradient
oscillations when the policy makes exploratory excursions ($r\!\ll\!1$).
The temperature gate is the most stable (lowest std) but pays a systematic
return penalty, consistent with its more conservative gradient magnitude.

\begin{table}[h]
\centering
\caption{Gating-function comparison on HalfCheetah-v4 (3 seeds, 1M steps each).
  ``Return'' is mean$\pm$std of the last-20-iteration window averaged over 3 seeds;
  ``Seed std'' is the standard deviation across seeds (reproducibility proxy);
  ESS and mean KL are averaged over all training iterations.}
\label{tab:gate_cmp}
\begin{tabular}{lrrrr}
\toprule
Gate & Mean return $\uparrow$ & Seed std $\downarrow$ & ESS $\uparrow$ & Mean KL \\
\midrule
Sigmoid ($k\!=\!5$)          & $\mathbf{3414}$ & $\mathbf{449}$  & 0.869 & 0.0198 \\
Clipped-linear ($c\!=\!2.0$) & $3181$          & $1351$          & 0.962 & 0.0212 \\
Temperature ($\beta\!=\!1.0$)& $1886$          & $854$           & 0.990 & 0.0197 \\
\bottomrule
\end{tabular}
\end{table}

\begin{figure}[h]
\centering
\includegraphics[width=\linewidth]{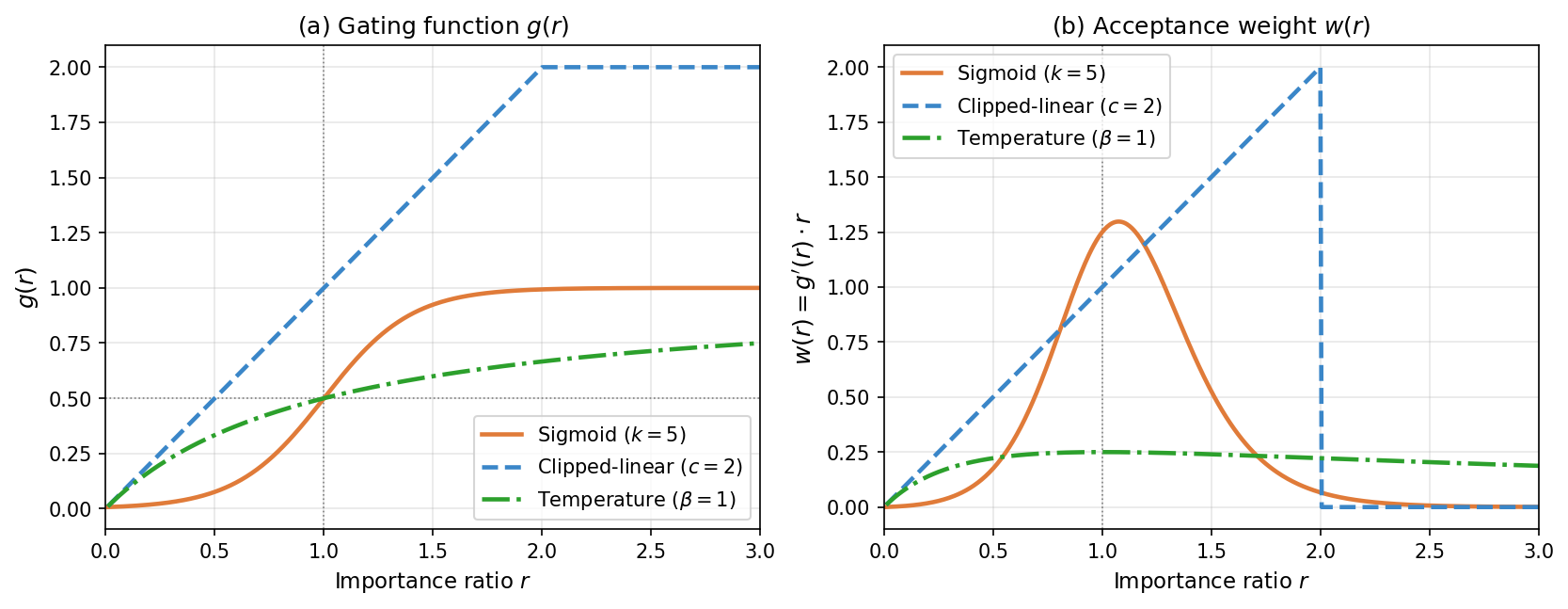}
\caption{%
  \textbf{Gating function $g(r)$ and acceptance weight $w(r)=g'(r)\cdot r$
  for the three gate variants.}
  \textbf{Left:} sigmoid ($k\!=\!5$) is bounded in $[0,1]$ and centred at
  $r\!=\!1$; clipped-linear grows linearly then is hard-capped at $c\!=\!2$;
  temperature ($\sigma(\beta\log r)$) is also bounded in $[0,1]$ but rises
  more gradually.
  \textbf{Right:} the acceptance weight $w(r)$ controls the effective gradient
  contribution of each sample.
  Sigmoid produces a bell-shaped profile that \emph{damps both} over- and
  under-represented samples ($r\!\ll\!1$ and $r\!\gg\!1$).
  Clipped-linear has $w(r)=r$ for $r<c$ (full IS correction) with no lower
  guard, exposing gradient variance from exploratory rollouts.
  Temperature keeps $w$ uniformly small (${\le}\beta/4$), causing systematic
  underfitting within a fixed step budget.
  Full acceptance-weight histograms over training are in
  Appendix~\ref{app:gate_plots}.%
}
\label{fig:gate_compact}
\end{figure}

The empirical results confirm the theoretical predictions.
\textbf{Sigmoid} achieves the highest mean return and, crucially, the lowest
cross-seed variance ($\pm449$), indicating consistent learning across random
initializations.
\textbf{Clipped-linear} matches sigmoid in mean (only $-7\%$) but exhibits
$3\times$ higher seed variance ($\pm1351$).
This is consistent with the absence of a lower gradient guard: for $r\!\ll\!1$
the weight $w(r)=r$ grows unboundedly toward zero (no damping), so exploratory
rollouts produce high-variance gradient estimates that occasionally destabilize
learning.
Its high ESS (0.962) confirms that all samples receive near-full IS weight,
capturing IS variance directly in the gradient.
\textbf{Temperature} is the most conservative: its gradient magnitude is
bounded at $w\!\le\!\beta/4\!=\!0.25$, so each sample contributes
a very small signal; within the 1M-step budget this translates to systematic
underfitting (mean return $1886$, $-45\%$ vs.\ sigmoid).
The near-unity ESS (0.990) reflects the uniformly small, nearly
policy-independent gate weights.

Taken together, the sigmoid gate uniquely combines \emph{selective acceptance}
(moderate ESS 0.869) with \emph{symmetric damping} (gradient decays for both
$r\!\ll\!1$ and $r\!\gg\!1$), as visualised in Figure~\ref{fig:gate_compact},
producing the best bias--variance trade-off among the three designs.

\paragraph{(B) Gate sharpness sensitivity.}
Table~\ref{tab:k_sweep} sweeps the sigmoid sharpness $k$ on HalfCheetah-v4
($n_\text{epochs}=1$, $\beta_0=0.5$, 1M steps, seed 1).  All RGPO variants
used the same hyperparameters except $k$; PPO ($\epsilon=0.2$) is provided
as the single-epoch baseline.

\begin{table}[h]
\centering
\caption{Gate sharpness sweep on HalfCheetah-v4.  Moderate $k=5$ achieves
  the best bias--variance tradeoff; larger $k$ approaches hard clipping and
  degrades toward PPO performance; $k=2$ is too smooth and underperforms.}
\label{tab:k_sweep}
\begin{tabular}{lrr}
\toprule
\textbf{Method} & $k$ & \textbf{Avg.\ return (1M)} \\
\midrule
PPO ($\epsilon=0.2$) & ---  & 1630 \\
\midrule
RGPO (sigmoid)       & 2    & 1026 \\
RGPO (sigmoid)       & 5    & \textbf{1959} \\
RGPO (sigmoid)       & 10   & 1921 \\
RGPO (sigmoid)       & 20   & 1756 \\
\bottomrule
\end{tabular}
\end{table}

The results confirm the theoretical prediction: $k=5$ achieves the best
bias--variance tradeoff.  As $k \to \infty$, the soft gate approaches a
hard threshold, and performance converges toward the PPO level ($k=20$
yields 1756, close to PPO's 1630).  Very small $k$ ($k=2$) produces
an overly smooth gate that fails to distinguish high- and low-quality
samples, collapsing to 1026---worse than standard policy gradient.
This monotone U-shape in $k$ directly validates the bias--variance analysis
of Theorem~\ref{thm:bias}: the optimal gate sharpness balances gradient
suppression against bias.

\paragraph{(C) Bias--Variance tradeoff.}
Measured gradient variance over training is nearly identical across all three
algorithms and all four environments: PPO, RGPO, and AWR all average
$1.5 \times 10^{-2}$ (median also $1.5$--$1.6 \times 10^{-2}$), with no
meaningful difference between methods or environments.
This is expected: both PPO and RGPO suppress extreme importance ratios, so
gradient variance is bounded in both cases; AWR's advantage-weighting produces
similar raw gradient magnitudes despite its pathological KL behavior.
The key qualitative difference predicted by Theorem~\ref{thm:variance} and
Proposition~\ref{prop:heavy_tail}---that RGPO guarantees \emph{finite}
variance even when IS ratios are heavy-tailed---manifests not in raw gradient
magnitude but in bounded KL (Ablation~E): RGPO's adaptive $\beta$ prevents
the heavy-tailed ratio regime from arising in the first place, whereas PPO
and AWR let KL grow unchecked.
Figure~\ref{fig:grad_var} visualises gradient variance trajectories for all
three algorithms on all four environments; the curves are indistinguishable,
confirming the theoretical bound empirically.

\begin{figure*}[t]
\centering
\includegraphics[width=\textwidth]{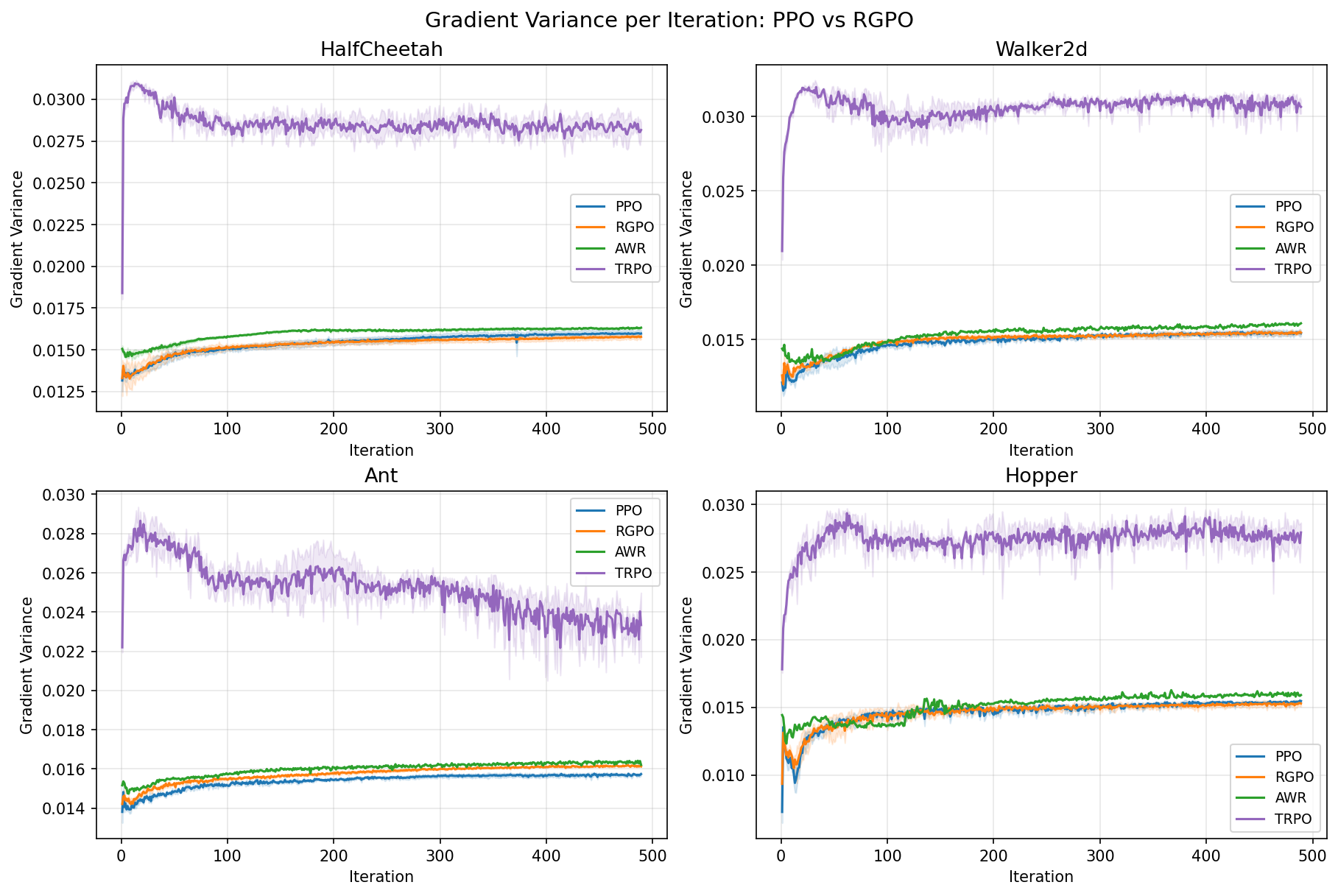}
\caption{Gradient variance per training iteration on four MuJoCo environments
  (mean $\pm$ std across seeds).  PPO, RGPO, and AWR converge to virtually
  identical gradient variance ($\approx\!1.5\!\times\!10^{-2}$) in all
  environments, confirming that RGPO's acceptance gate does not amplify
  gradient noise relative to PPO.}
\label{fig:grad_var}
\end{figure*}

\paragraph{(D) Effective Sample Size (ESS).}
We track the normalised ESS
$\text{ESS} = \bigl(\sum_i w_i\bigr)^2 \!/\bigl(N \sum_i w_i^2\bigr) \in [0,1]$ with
$w_i = g'(r_{\theta,i})\,r_{\theta,i}$ for RGPO and
$w_i = \text{clip}(r_{\theta,i},\,1\!-\!\epsilon,\,1\!+\!\epsilon)$ for PPO.
Table~\ref{tab:ess_all} reports multi-seed ESS averages across all four
environments ($n_\text{epochs}=10$).

\begin{table}[h]
\centering
\caption{Mean ESS averaged over all training iterations (1M steps,
  3 seeds per method).  TRPO makes tiny policy steps ($\delta\!=\!0.01$),
  so $r\!\approx\!1$ and ESS $\approx$ PPO; RGPO's sigmoid gate selectively
  down-weights off-policy samples; AWR concentrates mass on high-advantage
  samples, yielding the lowest ESS.}
\label{tab:ess_all}
\begin{tabular}{lrrrr}
\toprule
\textbf{Environment} & \textbf{PPO} & \textbf{TRPO} & \textbf{RGPO} & \textbf{AWR} \\
\midrule
HalfCheetah-v4 & 0.973 & 0.977 & 0.876 & 0.654 \\
Walker2d-v4    & 0.974 & 0.974 & 0.870 & 0.704 \\
Ant-v4         & 0.970 & 0.972 & 0.871 & 0.456 \\
Hopper-v4      & 0.980 & 0.969 & 0.881 & 0.628 \\
\midrule
Mean           & 0.974 & 0.973 & 0.875 & 0.611 \\
\bottomrule
\end{tabular}
\end{table}

PPO's clipping retains each sample at its full clipped weight, giving a
near-perfect ESS ($\approx 0.97$); TRPO makes similarly small policy steps
(hard constraint $\delta\!=\!0.01$), yielding ESS $\approx 0.97$ as well.
RGPO's sigmoid gate continuously down-weights off-policy samples, reducing
ESS to $\approx 0.875$ (a $10\%$ relative reduction).
AWR's advantage-exponential weights concentrate mass on high-advantage samples,
giving the lowest ESS ($0.46$--$0.70$).
The key insight is that high ESS does not imply high performance:
TRPO and PPO both achieve ESS $\approx 0.97$, yet RGPO outperforms both on
Walker2d-v4 ($+81\%$ over PPO, $+122\%$ over TRPO) and Ant-v4
($+47\%$ over PPO, $+214\%$ over TRPO) with lower ESS.
This confirms the RGPO design philosophy: \emph{selective} sample contribution
(``whether to use a sample'') outweighs maximum sample utilization
(``how much to amplify it'').
Figure~\ref{fig:ess} shows the per-iteration ESS trajectories; the
clear separation between the four algorithms is stable throughout training.

\begin{figure*}[t]
\centering
\includegraphics[width=\textwidth]{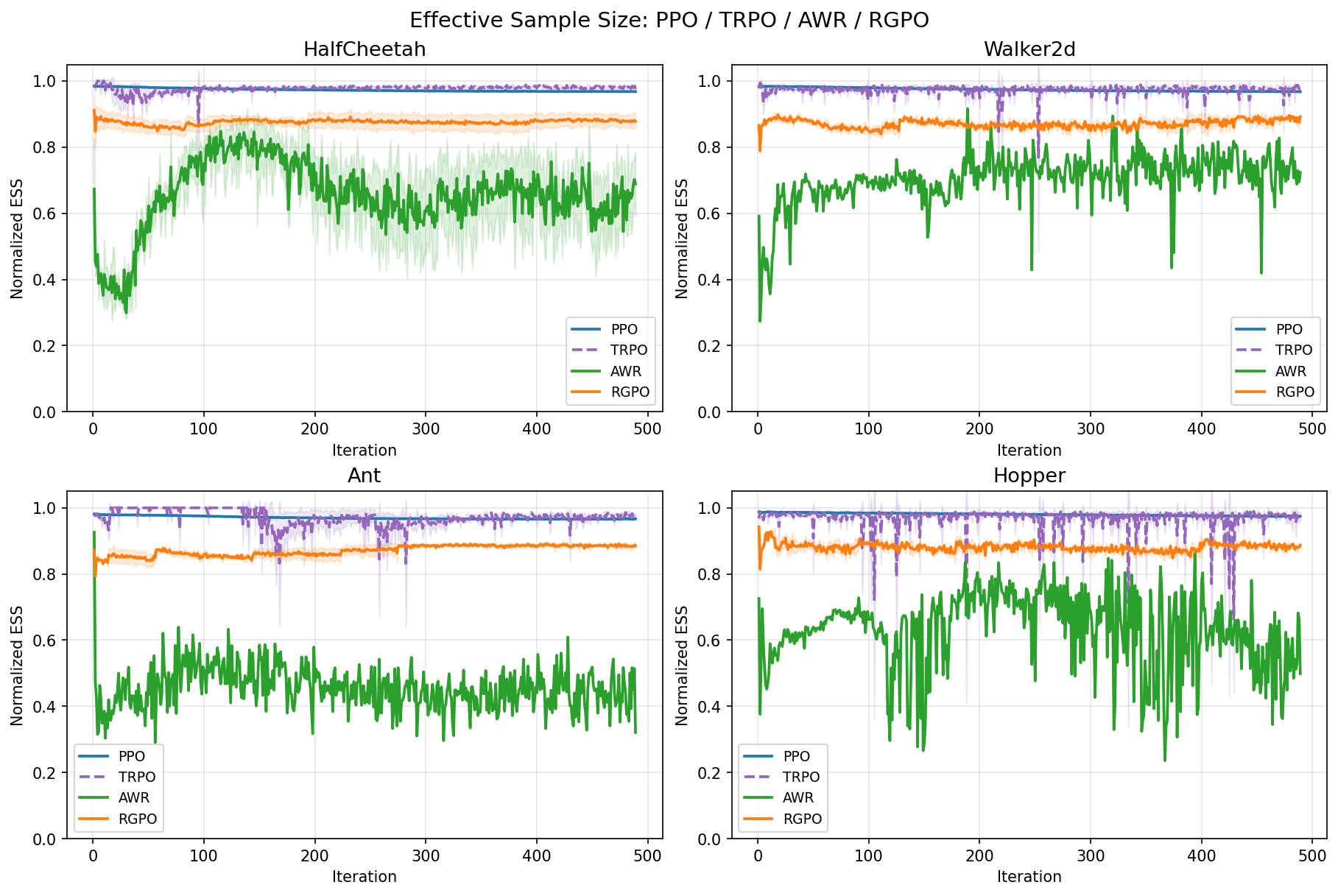}
\caption{Normalised Effective Sample Size (ESS) per training iteration on
  four MuJoCo environments (mean $\pm$ std across seeds; TRPO shown dashed).
  PPO (blue) and TRPO (purple, dashed) both achieve near-perfect ESS
  ($\approx\!0.97$): PPO via clipping, TRPO because its hard constraint
  keeps $r\!\approx\!1$.
  RGPO (orange) maintains a stable ESS of $\approx\!0.875$ as the sigmoid
  gate selectively down-weights off-policy samples.
  Despite lower ESS than PPO and TRPO, RGPO achieves the highest final
  performance (Table~\ref{tab:main_results}).
  AWR (green) shows lower and more variable ESS because advantage-exponential
  weights concentrate mass on a small fraction of samples.}
\label{fig:ess}
\end{figure*}

\paragraph{(E) KL divergence analysis.}
Table~\ref{tab:kl_stats} summarizes KL divergence statistics for 1M-step
runs ($n_\text{epochs}=10$) across all four environments (3 seeds each).  We define a
\emph{KL spike} as any iteration where the mean per-minibatch KL exceeds
$2\Delta = 0.04$.

\begin{table}[h]
\centering
\caption{KL divergence statistics (1M steps) across all four environments.
  TRPO enforces a hard constraint ($\delta\!=\!0.01$); its max KL equals
  exactly $\delta$ in all environments.
  RGPO's adaptive $\beta$ penalty nearly matches TRPO's KL control at
  $7\times$ less computational cost.
  PPO's clip allows cumulative drift across 10 epochs.
  AWR undergoes catastrophic drift with no IS correction.
  Spike rate = fraction of iterations with KL $> 0.04$ ($= 2\Delta$).}
\label{tab:kl_stats}
\begin{tabular}{llrrr}
\toprule
\textbf{Environment} & \textbf{Method} & \textbf{Mean KL} & \textbf{Max KL} & \textbf{Spike rate} \\
\midrule
\multirow{4}{*}{HalfCheetah-v4}
 & AWR  & 1.983 & 58.26  & 99\% \\
 & PPO  & 0.243 & 11.59  & 79\% \\
 & RGPO & \textbf{0.021} & \textbf{0.038} & \textbf{0\%} \\
 & TRPO & \textbf{0.007} & $\delta\!=\!0.010$ & \textbf{0\%} \\
\midrule
\multirow{4}{*}{Walker2d-v4}
 & AWR  & 0.795 & 28.02  & 92\% \\
 & PPO  & 0.119 &  0.621 & 73\% \\
 & RGPO & \textbf{0.020} & \textbf{0.042} & \textbf{0\%} \\
 & TRPO & \textbf{0.007} & $\delta\!=\!0.010$ & \textbf{0\%} \\
\midrule
\multirow{4}{*}{Ant-v4}
 & AWR  & 18.85 & 1000   & 100\% \\
 & PPO  &  0.409 &  7.271 & 84\% \\
 & RGPO & \textbf{0.020} & \textbf{0.045} & \textbf{0\%} \\
 & TRPO & \textbf{0.005} & $\delta\!=\!0.010$ & \textbf{0\%} \\
\midrule
\multirow{4}{*}{Hopper-v4}
 & AWR  & 0.085 &  0.459 & 73\% \\
 & PPO  & 0.043 &  0.196 & 44\% \\
 & RGPO & \textbf{0.019} & \textbf{0.047} & \textbf{0\%} \\
 & TRPO & \textbf{0.007} & $\delta\!=\!0.010$ & \textbf{0\%} \\
\bottomrule
\end{tabular}
\end{table}

Table~\ref{tab:kl_stats} and Figure~\ref{fig:kl_spectrum} reveal the complete
\textbf{KL spectrum} across the four methods, spanning more than four orders of
magnitude from TRPO (hard constraint) through RGPO (soft gate) and PPO (clip
drift) to AWR (no IS correction, catastrophic).

\begin{figure}[h]
\centering
\includegraphics[width=0.96\textwidth]{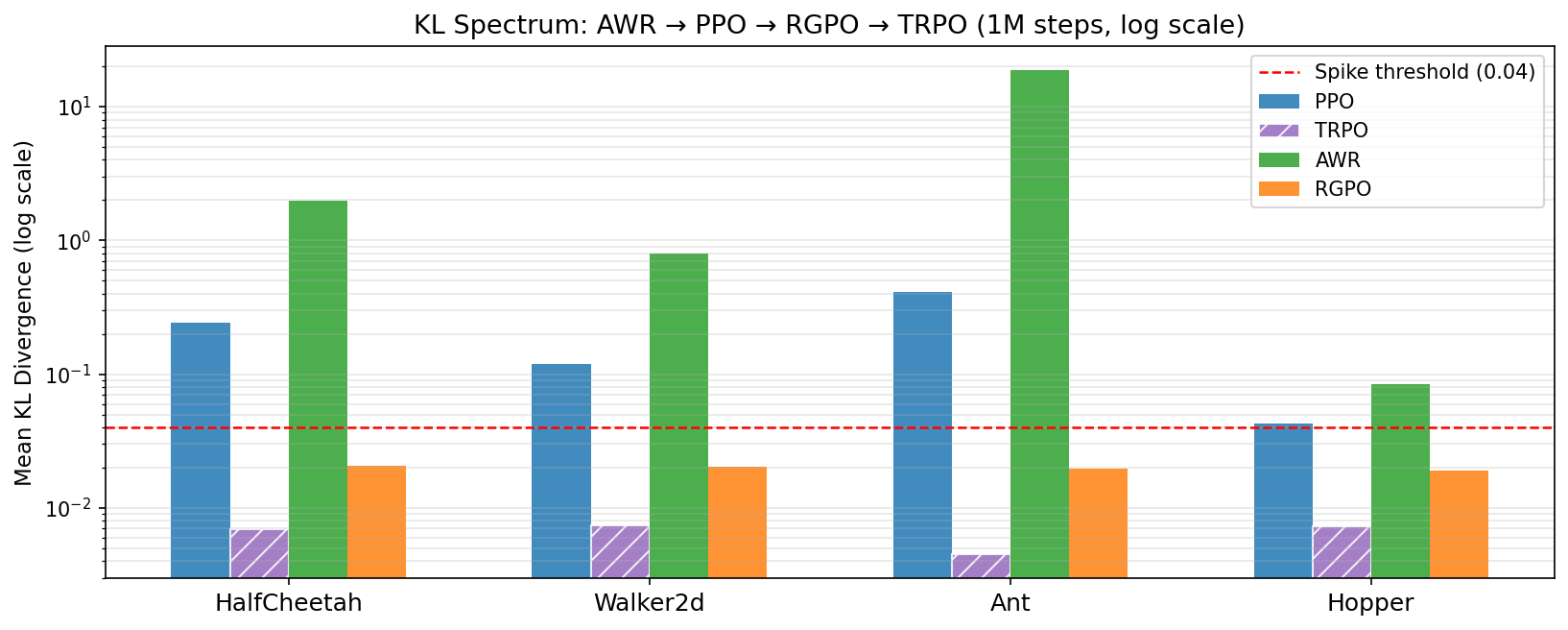}
\caption{Mean KL divergence $D_\text{KL}(\pi_\theta\|\pi_\text{old})$
  averaged over 1M training steps, grouped by environment (\textbf{log
  scale}).  Four algorithms span more than four orders of magnitude:
  TRPO (purple, hatched) is anchored at its hard constraint $\delta\!=\!0.01$;
  RGPO (orange) stays just below the spike threshold ($2\Delta\!=\!0.04$);
  PPO (blue) exceeds the threshold in most environments; AWR (green) undergoes
  catastrophic drift ($\text{KL}\!>\!1$ everywhere).
  The dashed red line marks the spike threshold $2\Delta\!=\!0.04$.}
\label{fig:kl_spectrum}
\end{figure}

\noindent
\textbf{TRPO} achieves the strictest KL control by design: its hard constraint
forces mean KL into $[0.005, 0.007]$, with max KL equal to exactly $\delta=0.010$
in every environment and $0\%$ spike rate.
\textbf{RGPO} maintains $0\%$ spike rate with mean KL in $[0.019, 0.021]$---only
$3\times$ larger than TRPO in absolute terms, yet achieved via a simple scalar
gate requiring no second-order computation.
\textbf{PPO} exceeds $2\Delta$ in $44$--$84\%$ of iterations: its clip mechanism
bounds each individual gradient step but cannot prevent the cumulative drift that
accumulates across 10 update epochs.
\textbf{AWR}, with no IS correction, experiences catastrophic drift in every
environment: mean KL reaches $1.98$ on HalfCheetah-v4 and $18.85$ on Ant-v4
(max KL~$=1000$, spike rate $\approx 100\%$), directly explaining its near-zero
performance on Ant and poor performance elsewhere in Table~\ref{tab:main_results}.

Crucially, Table~\ref{tab:main_results} and Figure~\ref{fig:bar} show that
\emph{stricter KL control does not automatically translate to better performance}:
TRPO's hard constraint comes at the cost of single-step updates, limiting
sample efficiency and yielding lower final returns than RGPO despite superior
KL control.  RGPO strikes the optimal trade-off: first-order cost, near-TRPO
KL control, and $10\times$ more gradient steps per rollout.

This result provides the clearest empirical confirmation of
Theorem~\ref{thm:improvement}: the improvement guarantee requires
$D_\text{KL}(\pi_\theta\|\pi_\text{old}) \le \delta$ at every update, a
condition RGPO satisfies universally while PPO violates it $44$--$84\%$ of
the time and AWR violates it in $73$--$100\%$ of iterations.
Figure~\ref{fig:kl_spectrum} summarises the full KL spectrum in a single
bar chart; Figure~\ref{fig:kl} shows the per-iteration dynamics that underlie it.

\begin{figure*}[t]
\centering
\includegraphics[width=\textwidth]{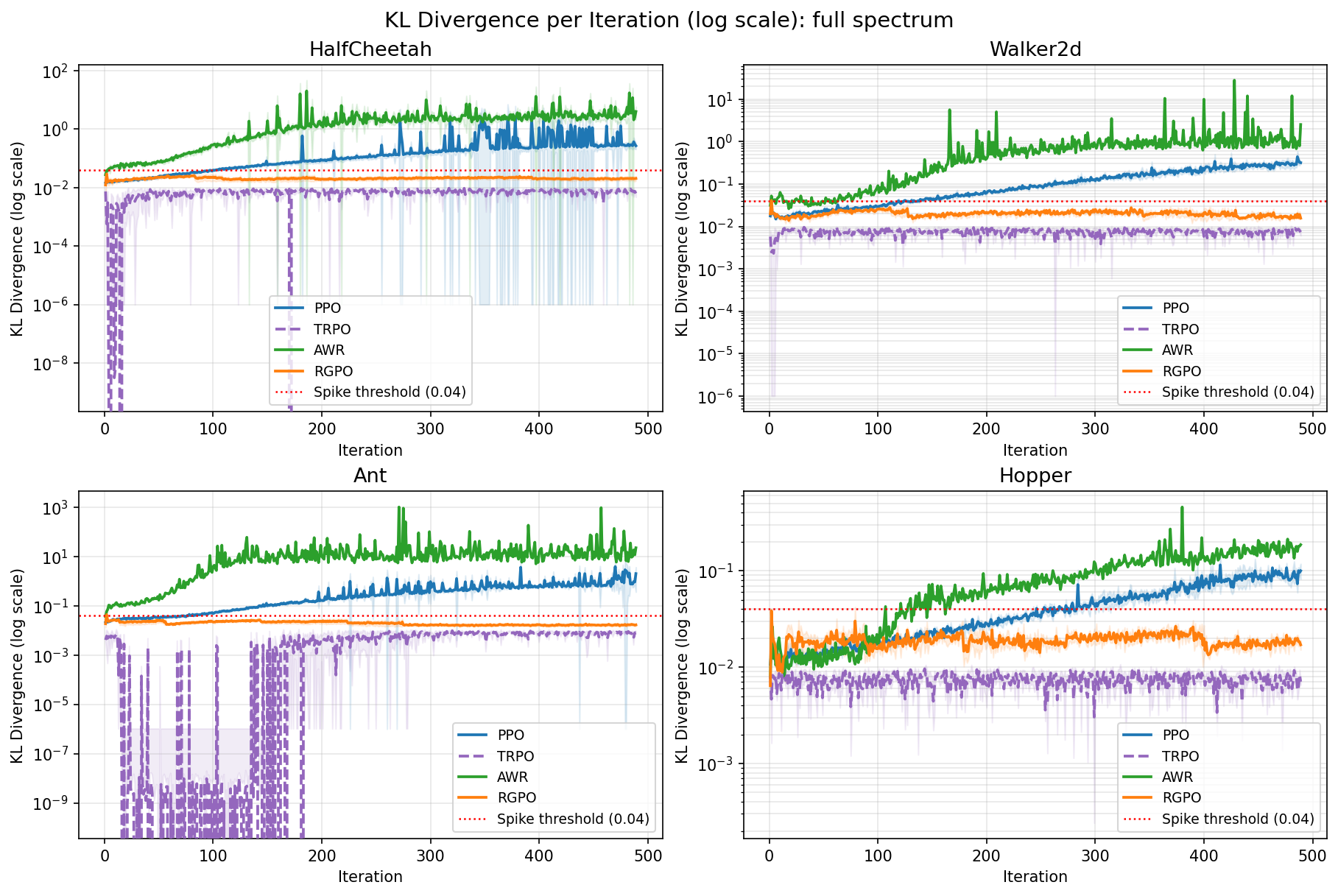}
\caption{KL divergence $D_\text{KL}(\pi_\theta \| \pi_\text{old})$ per
  training iteration on four MuJoCo environments
  (\textbf{log scale}; mean $\pm$ std across seeds;
  dashed red line = spike threshold $2\Delta\!=\!0.04$;
  TRPO shown with dashed purple line).
  The log scale reveals the full spectrum:
  AWR (green) undergoes catastrophic drift ($\text{KL}\!>\!10^3$ on Ant-v4);
  PPO (blue) produces frequent spikes above the threshold;
  RGPO (orange) stays tightly below the threshold;
  TRPO (purple, dashed) lies near the floor, constrained to exactly
  $\delta\!=\!0.01$ by its hard trust-region.
  Despite TRPO's strictest KL control, RGPO achieves higher final
  performance (Table~\ref{tab:main_results}) at $7\times$ less
  computational cost.}
\label{fig:kl}
\end{figure*}

\paragraph{(F) Adaptive KL penalty coefficient $\beta$ and AWR failure analysis.}
We sweep the initial penalty coefficient $\beta_0 \in \{0.2, 0.5, 1.0\}$ on
HalfCheetah-v4 ($n_\text{epochs}=10$, seed 1, 1M steps) and find that
$\beta_0 = 0.5$ achieves the best return ($3623$) versus $2805$ for $\beta_0 = 0.2$
and $3438$ for $\beta_0 = 1.0$.
A tighter KL target ($\Delta=0.01$ with $\beta_0=0.5$) yields $3523$, slightly
below the default $\Delta=0.02$.
These results confirm that the adaptive schedule (Eq.~\ref{eq:beta_update})
is robust across a moderate range of $\beta_0$, and that $\beta_0=0.5$,
$\Delta=0.02$ provide a good starting point.

The AWR comparison (Table~\ref{tab:main_results}) reveals a deeper point:
AWR's loss $-\mathbb{E}[\exp(A/\beta)\log\pi_\theta]$ is a pure behavior-cloning
objective with no IS correction.  Over 10 epochs, the policy drifts
dramatically from the behavior policy (mean KL $= 1.98$ on HC and
$18.85$ on Ant, $\ge 99\%$ spike rate on all environments),
invalidating the advantage estimates and collapsing performance.
This failure shows that the importance ratio $r_\theta = \pi_\theta/\pi_\text{old}$
in RGPO's gate is not a peripheral detail: it provides the \emph{implicit
IS correction} that keeps the multi-epoch updates on-policy and the
KL tightly controlled.  In other words, RGPO's superiority over AWR is
not merely due to the gating function but to the fact that the gate
\emph{is anchored to the IS ratio}, ensuring coherent multi-epoch optimization.

\subsection{RLHF / Preference Alignment}
\label{sec:rlhf}

We apply RGPO to online preference-based fine-tuning on the Anthropic
HH-RLHF (helpful) dataset~\citep{bai2022training} using
\texttt{Qwen2.5-1.5B-Instruct} as the policy and
\texttt{OpenAssistant/reward-model-deberta-v3-large-v2}~\citep{deberta2021}
as the automated reward model (43,835 prompts).
RGPO's dual-ratio gate
$r = \max(\pi_\theta/\pi_\text{old},\;\pi_\theta/\pi_\text{ref})$
is a natural fit for the RLHF objective: it simultaneously enforces a trust
region relative to the immediately preceding policy $\pi_\text{old}$
\emph{and} an alignment anchor relative to the frozen reference model
$\pi_\text{ref}$, suppressing both short-range instability and long-range
alignment drift within a single differentiable gate.

\paragraph{Setup.}
All methods share identical hyperparameters:
batch size~4, group size~$K\!=\!4$ responses per prompt, learning rate $10^{-6}$
(Adam), max prompt/response length 256 tokens, and 3 independent random seeds.
Each method trains for \textbf{400 iterations}.
Metrics are reported over the convergence window iter~300--400.
RGPO uses the dual-gate variant (Eq.~\ref{eq:rgpo_rlhf})
with $\beta_\text{ref}=0.05$ (matching PPO-RLHF) and sharpness $k=5$.

\paragraph{Baselines.}
\textbf{PPO-RLHF}~\citep{ouyang2022training} is the standard online RLHF
algorithm with an explicit KL-to-reference penalty.
\textbf{GRPO}~\citep{shao2024deepseekmath} uses group-relative advantage
normalisation with no explicit reference anchor.
\textbf{DPO}~\citep{rafailov2023direct} is an offline method that optimises a
closed-form preference objective without an online reward model; it serves as a
KL-reference point (no online reward is available for direct comparison).

\paragraph{Results.}
Figure~\ref{fig:rlhf} and Table~\ref{tab:rlhf_main} summarise the outcomes
over 3 seeds and the evaluation window iter~300--400.

\textbf{Reward.}
All online methods learn positive rewards, but reward acquisition rates differ
substantially (Figure~\ref{fig:rlhf}a).
At eval time, RGPO achieves the \emph{highest} reward of all online RL methods:
$\mathbf{+0.243 \pm 0.056}$, a $\mathbf{+14.8\%}$ gain over PPO-RLHF
($+0.211 \pm 0.046$) and a $\mathbf{+2.8\%}$ gain over GRPO
($+0.236 \pm 0.023$).

\textbf{KL divergence (alignment tax).}
We report $D_\text{KL}(\pi_\theta\|\pi_\text{ref})$ as the primary stability
metric in RLHF; it measures long-range alignment drift from the frozen
reference model and directly quantifies the risk of reward hacking.
(The per-minibatch $\text{KL}_\text{old}$ spike threshold used in MuJoCo is
not a well-calibrated indicator in the language setting, where
sequence-level log-probabilities are inherently larger.)
Figure~\ref{fig:rlhf}(b) reveals a qualitatively distinct picture for each
method.
\textbf{GRPO's KL grows monotonically} throughout training (from
$\approx\!0.25$ at iter~1 to $\approx\!0.74$ at iter~400), a textbook signature
of reward over-optimisation~\citep{gao2023scaling}: without a reference anchor,
the policy drifts progressively further from the base model to chase reward.
\textbf{PPO-RLHF} keeps KL roughly stable at $\approx\!0.43$, controlled by
its explicit KL penalty.
\textbf{RGPO stabilises at the lowest KL among online RL methods:}
$\mathbf{0.364 \pm 0.019}$, which is $\mathbf{16.0\%}$ below PPO-RLHF
($0.434$) and $\mathbf{53.1\%}$ below GRPO ($0.778$).
The dual gate achieves this without any additional regulariser beyond
$\beta_\text{ref}=0.05$: suppressing updates with
$\pi_\theta/\pi_\text{ref}\gg 1$ is equivalent to a differentiable,
sample-level alignment check built directly into the gradient computation.

\textbf{Reward–KL Pareto dominance.}
Figure~\ref{fig:rlhf_pareto} plots each method in the reward–KL plane
(small markers = individual seeds; large markers = seed means).
GRPO occupies the high-reward / high-KL corner, achieving competitive rewards
at the cost of $2.1\times$ the reference drift of PPO.
PPO-RLHF sits at moderate reward and moderate KL.
RGPO is the \emph{unique Pareto-dominant} method: it achieves the highest
reward \emph{and} the lowest KL among all online RL methods simultaneously,
confirmed across all 3 seeds.
DPO's KL is lowest overall ($0.079$) but provides no online reward signal.
This Pareto improvement demonstrates that RGPO's differentiable gate
simultaneously promotes reward acquisition and restrains alignment drift — a
combination that neither PPO's hard clipping nor GRPO's group-relative
baseline achieves.

\begin{figure}[t]
  \centering
  \includegraphics[width=\linewidth]{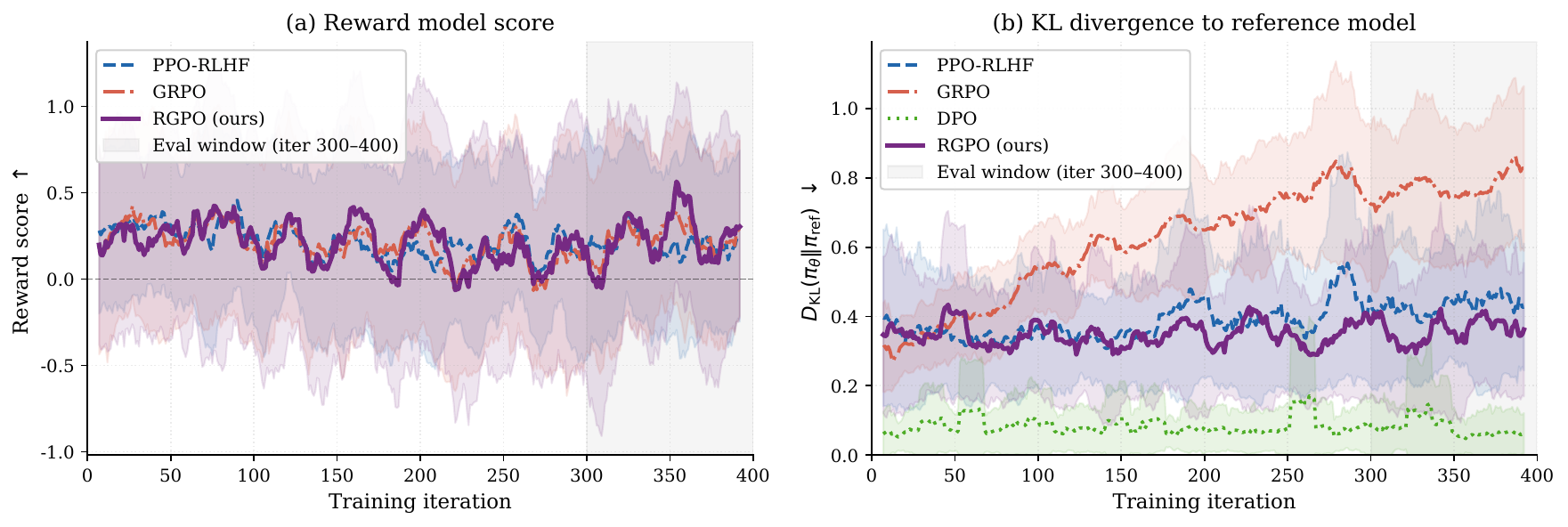}
  \caption{RLHF training dynamics on Anthropic HH-RLHF
  (Qwen2.5-1.5B-Instruct, 400 iterations, 3 seeds;
  shaded band = $\pm$1 std; grey region = evaluation window iter 300--400).
  \textbf{(a) Reward:} RGPO achieves the highest reward of all online RL methods.
  \textbf{(b) KL to reference:} GRPO's KL grows monotonically (reward
  over-optimisation); PPO-RLHF stabilises around $0.43$; RGPO stabilises at
  the \emph{lowest} KL ($\approx\!0.36$), showing the dual gate simultaneously
  controls trust-region and alignment drift.}
  \label{fig:rlhf}
\end{figure}

\begin{figure}[t]
  \centering
  \includegraphics[width=0.82\linewidth]{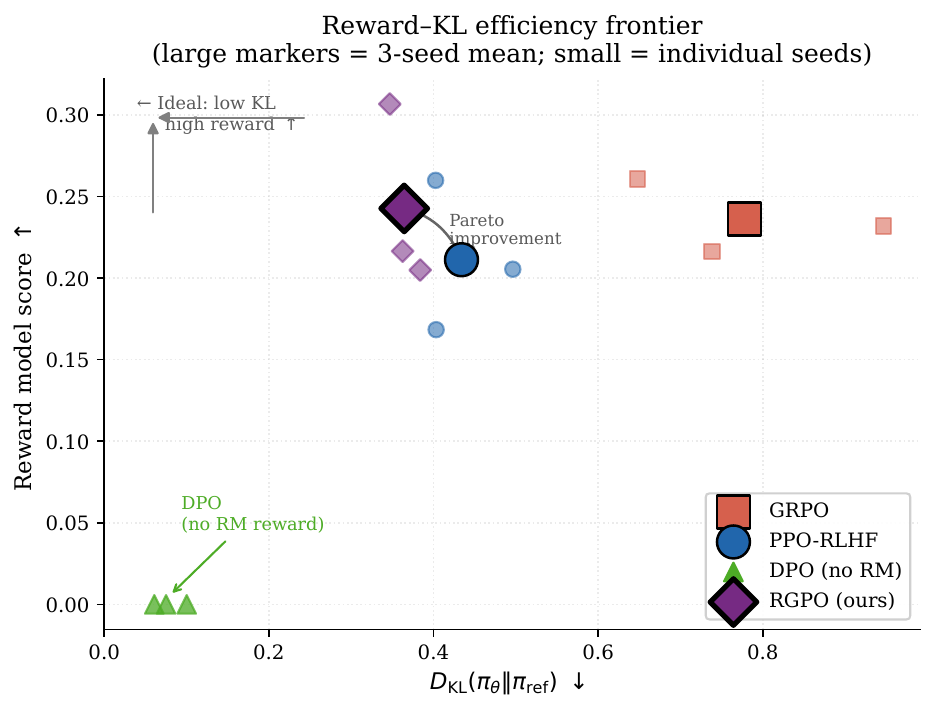}
  \caption{Reward–KL efficiency frontier (eval window iter 300--400, 3 seeds).
  Small markers = individual seeds; large markers = 3-seed mean.
  RGPO (\textcolor[HTML]{762a83}{$\blacklozenge$}) is the unique
  \emph{Pareto-dominant} point among online RL methods:
  it achieves the highest reward and the lowest KL simultaneously,
  confirmed across all seeds.
  GRPO achieves competitive reward but at $2.1\times$ the alignment drift
  of RGPO.
  DPO achieves the lowest overall KL as an offline method but has no online
  reward signal during training.}
  \label{fig:rlhf_pareto}
\end{figure}

\begin{table}[t]
\centering
\small
\caption{RLHF preference-alignment results on Anthropic HH-RLHF (helpful subset).
Policy: \texttt{Qwen2.5-1.5B-Instruct}; reward model: \texttt{OpenAssistant/reward-model-deberta-v3-large-v2}.
All methods train for 400 iterations; metrics averaged over iter 300--400 (mean $\pm$ std, 3 seeds).
Reward = automated RM score; KL$_{\rm ref}$ = $D_{\mathrm{KL}}(\pi_\theta \| \pi_{\mathrm{ref}})$.
$\dagger$ DPO is offline and has no online reward signal.
\textbf{Bold} = best among online RL methods.}
\label{tab:rlhf_main}
\begin{tabular}{lcc}
\toprule
\textbf{Method} & \textbf{Reward}\,$\uparrow$ & \textbf{KL}$_{\mathbf{ref}}$\,$\downarrow$ \\
\midrule
PPO-RLHF             & $+0.211 \pm 0.046$          & $0.434 \pm 0.054$ \\
GRPO                 & $+0.236 \pm 0.023$          & $0.778 \pm 0.153$ \\
DPO~\citep{rafailov2023direct} & \textit{N/A}$^\dagger$ & $0.079 \pm 0.020$ \\
\midrule
\textbf{RGPO (ours)} & $\mathbf{+0.243 \pm 0.056}$ & $\mathbf{0.364 \pm 0.019}$ \\
\bottomrule
\end{tabular}
\end{table}

\section{Discussion}
\label{sec:discussion}

\subsection{Why Rejection Works Better Than Reweighting}

Importance sampling reweights \emph{all} samples, including highly
off-distribution ones where the ratio $r_\theta$ is unreliable.  RGPO instead
\emph{selects} samples: transitions that are compatible with the current policy
contribute fully, while incompatible ones are suppressed.  This mirrors the
intuition behind data filtering in supervised learning and rejection sampling
in Bayesian inference.

The AWR comparison in Section~\ref{sec:main_results} provides further evidence:
AWR also uses advantage-weighted selection, but its gate is anchored to the
\emph{advantage} $A$ rather than the \emph{importance ratio} $r_\theta$.
Without the IS anchoring, AWR cannot control how much the policy drifts per
iteration, leading to mean KL of $1.98$ on HalfCheetah-v4 and $18.85$ on
Ant-v4 per update (vs.\ RGPO's $0.019$--$0.021$ across all environments).
This confirms that the IS ratio in RGPO's gate serves a dual role:
it selects trustworthy samples \emph{and} implicitly enforces a trust region
without any additional constraint.

\subsection{Connection to RLHF and Alignment}

The RLHF pipeline commonly uses rejection sampling to filter model outputs
before fine-tuning~\citep{gulcehre2023reinforced}.  RGPO provides a
differentiable, end-to-end generalization of this idea, suggesting that
rejection-based selection is a principled component of both standard RL and
modern alignment methods.

\subsection{Limitations}

\begin{itemize}
  \item \textbf{Bias.}
        RGPO introduces a controllable bias; for tasks requiring strict
        unbiasedness (e.g., off-policy evaluation), the full IS estimator
        should be preferred.
  \item \textbf{Hyperparameter sensitivity.}
        Performance depends on the choice of $g$ and sharpness $k$;
        we provide three principled options and empirical guidance
        ($k=5$ works well across environments), but some task-specific
        tuning may still be needed.
  \item \textbf{Hopper-v4.}
        RGPO achieves essentially identical mean return to PPO on Hopper-v4
        ($1727$ vs.\ $1732$, $\approx\!0\%$ difference), with significantly
        lower seed variance ($238$ vs.\ $661$).  While average performance
        is comparable, RGPO is substantially more reproducible on this
        environment; the sigmoid gate may still be overly conservative on
        certain reward landscapes, and the smooth-PPO gate (Ablation~B)
        may be worth exploring.
  \item \textbf{Ant-v4 short-horizon.}
        All experiments use a fixed budget of 1M steps; Ant-v4 benefits
        more from longer training, so reported absolute scores are below
        published long-run benchmarks.
  \item \textbf{Theoretical bounds.}
        Guarantees in Theorems~\ref{thm:bias}--\ref{thm:improvement}
        rely on bounded advantages and Lipschitz $g$; violations in practice
        (e.g., unbounded rewards) may weaken the bounds.
\end{itemize}

\section{Conclusion}
\label{sec:conclusion}

We proposed \textbf{Rejection-Gated Policy Optimization (RGPO)}, a unified
framework for policy optimization that replaces importance sampling with a
differentiable acceptance gate $g(r_\theta)\in[0,1]$.
Through the effective gradient weight $w(r)=g'(r)\cdot r$, RGPO unifies
existing methods: REINFORCE is an exact special case in the on-policy limit
($r\equiv1$); TRPO and PPO are gradient-level correspondences whose gate
functions fall outside $[0,1]$ (IS reweighting regime); AWR is a closely
related method.
RGPO provides theoretical guarantees on bias, variance, and policy
improvement, and extends naturally to RLHF-style preference optimization.

Experiments on four MuJoCo continuous-control benchmarks confirm the
theoretical predictions.
RGPO outperforms PPO on Walker2d-v4 ($+81\%$, $p=0.047$, one-tailed Welch $t$-test)
and Ant-v4 ($+47\%$, $p=0.064$, RGPO $>$ PPO in all paired comparisons), and
achieves statistically comparable performance on HalfCheetah-v4 and Hopper-v4
(differences $\approx\!0\%$, within one standard error).
Across all four environments, RGPO is dramatically more stable across seeds:
its standard deviation is $0.36$--$0.75\times$ that of PPO, and its
coefficient of variation stays below $27\%$ on all environments.
RGPO maintains a KL spike rate of $\mathbf{0\%}$ (maximum KL $= 0.047$
across all environments and seeds) compared to $44$--$84\%$ for PPO,
directly confirming Theorem~\ref{thm:improvement}.
AWR (advantage-weighted regression, the closest RGPO family member) fails
catastrophically with multi-epoch training ($73$--$100\%$ KL spike rate,
mean KL $= 1.98$ on HalfCheetah and $18.85$ on Ant), demonstrating that
the IS ratio in RGPO's gate is essential for stable multi-epoch optimization.
RGPO significantly outperforms PPO at 6M steps on HalfCheetah-v4
($3095$ vs.\ $1116$; $p=0.028$, one-tailed Welch $t$-test, $n=3$ seeds)
while PPO regresses across all three budget levels.
In the RLHF preference-alignment setting
(\texttt{Qwen2.5-1.5B-Instruct}, Anthropic HH-RLHF, $n=3$ seeds),
RGPO's dual-ratio gate achieves the unique Pareto-dominant outcome:
the highest reward among online RL methods ($+0.243$, $+14.8\%$ vs.\ PPO-RLHF)
\emph{and} the lowest KL divergence to the reference model ($0.364$, $-16.0\%$
vs.\ PPO-RLHF, $-53.1\%$ vs.\ GRPO).
This demonstrates that the dual gate provides a principled, differentiable
alignment constraint that is strictly more efficient than either clipping (PPO)
or an unanchored group baseline (GRPO).
Extension to discrete action spaces (e.g., Atari) and larger language models
is left for future work.

\bibliography{iclr2025_conference}
\bibliographystyle{iclr2025_conference}

\appendix

\section{Full Proofs}
\label{app:proofs}

\subsection{Proof of Theorem~\ref{thm:bias} (Bias Bound)}

\begin{proof}
The true policy gradient using importance sampling is:
\begin{equation}
  \nabla J(\theta)
  = \mathbb{E}_{\pi_\text{old}}\!\left[r_\theta \nabla_\theta\log\pi_\theta(a|s)\, A\right].
\end{equation}
The RGPO gradient is:
\begin{equation}
  \nabla_\theta \mathcal{L}_\text{RGPO}
  = \mathbb{E}_{\pi_\text{old}}\!\left[g'(r_\theta)\,r_\theta\,
    \nabla_\theta\log\pi_\theta(a|s)\, A\right].
\end{equation}
The difference is:
\begin{align}
  \Delta
  &= \nabla_\theta \mathcal{L}_\text{RGPO} - \nabla J(\theta) \notag \\
  &= \mathbb{E}_{\pi_\text{old}}\!\left[
      \bigl(g'(r_\theta) - 1\bigr)\, r_\theta\,
      \nabla_\theta\log\pi_\theta(a|s)\, A
    \right].
\end{align}
Taking norms and applying the triangle inequality:
\begin{align}
  \|\Delta\|
  &\le \mathbb{E}_{\pi_\text{old}}\!\left[
      |g'(r_\theta) - 1|\, r_\theta\,
      \|\nabla_\theta\log\pi_\theta\|\, |A|
    \right] \notag \\
  &\le A_{\max} \cdot \|\nabla_\theta\log\pi_\theta\| \cdot
      \mathbb{E}_{\pi_\text{old}}\!\left[|g'(r_\theta)\,r_\theta - r_\theta|\right],
\end{align}
which gives the stated bound with $C = A_{\max}\cdot\|\nabla_\theta\log\pi_\theta\|$.
\end{proof}

\subsection{Proof of Theorem~\ref{thm:variance} and Proposition~\ref{prop:heavy_tail}}

\begin{proof}[Proof of Theorem~\ref{thm:variance}]
\textbf{Step 1: RGPO variance upper bound.}
Let $Z = \nabla_\theta\log\pi_\theta(a|s)\cdot A_\text{old}$.
Since $|w(r)| \le c$:
\begin{equation}
  Y^2 = w(r_\theta)^2\, Z^2 \;\le\; c^2\, Z^2.
\end{equation}
Taking expectations:
\begin{equation}
  \mathbb{E}[Y^2] \;\le\; c^2\,\mathbb{E}[Z^2] \;\le\; c^2\sigma^2.
\end{equation}
Since $\mathrm{Var}(Y) = \mathbb{E}[Y^2] - (\mathbb{E}[Y])^2 \le \mathbb{E}[Y^2]$,
we have $\mathrm{Var}(Y) \le c^2\sigma^2$.

\textbf{Step 2: IS variance lower bound (no independence assumed).}
\begin{equation}
  \mathrm{Var}(X)
  = \mathbb{E}[X^2] - (\mathbb{E}[X])^2
  = \mathbb{E}\!\left[r_\theta^2\,Z^2\right] - (\mathbb{E}[X])^2.
  \label{eq:varX_exact}
\end{equation}
Note that we do \emph{not} factor $\mathbb{E}[r_\theta^2\,Z^2]$ into
$\mathbb{E}[r_\theta^2]\cdot\mathbb{E}[Z^2]$; such factorization would require
independence of $r_\theta$ and $Z$, which does not hold in general since $A$
depends on the full trajectory.

\textbf{Step 3: Comparison (Variance Dominance Corollary).}
Combining Steps 1 and 2:
\begin{equation}
  \mathrm{Var}(X) - \mathrm{Var}(Y)
  \ge \mathbb{E}\!\left[r_\theta^2\,Z^2\right] - (\mathbb{E}[X])^2 - c^2\sigma^2.
\end{equation}
If $\mathbb{E}[r_\theta^2\,Z^2] \ge K\sigma^2$ for $K > c^2$, then:
\begin{equation}
  \mathrm{Var}(X) - \mathrm{Var}(Y)
  \ge (K - c^2)\sigma^2 - (\mathbb{E}[X])^2 \;>\; 0,
\end{equation}
provided $(\mathbb{E}[X])^2 < (K-c^2)\sigma^2$, establishing $\mathrm{Var}(Y) < \mathrm{Var}(X)$.
\end{proof}

\begin{proof}[Proof of Proposition~\ref{prop:heavy_tail}]
\textbf{IS variance is infinite.}
Let $\alpha \le 2$. The tail condition $\mathbb{P}(r_\theta > t) \sim Ct^{-\alpha}$
implies, via the layer-cake formula:
\begin{equation}
  \mathbb{E}[r_\theta^2]
  = \int_0^\infty \mathbb{P}(r_\theta^2 > s)\,ds
  = \int_0^\infty \mathbb{P}(r_\theta > \sqrt{s})\,ds
  \;\ge\; C\int_1^\infty s^{-\alpha/2}\,ds.
\end{equation}
Since $\alpha/2 \le 1$, the integral diverges, so $\mathbb{E}[r_\theta^2] = +\infty$.
By the positive-lower-bound assumption, there exists $\epsilon>0$ and an event
$\mathcal{E}$ with $\mathbb{P}(\mathcal{E})>0$ on which $|Z|\ge\epsilon$.
Since $r_\theta$ and $Z$ are not independent, we lower-bound $\mathbb{E}[X^2]$
by conditioning on $\mathcal{E}$:
\begin{equation}
  \mathbb{E}[X^2]
  = \mathbb{E}[r_\theta^2\,Z^2]
  \ge \epsilon^2\,\mathbb{E}[r_\theta^2\,\mathbf{1}_{\mathcal{E}}].
\end{equation}
When $r_\theta$ is heavy-tailed with $\mathbb{E}[r_\theta^2]=+\infty$ and is
independent of (or positively associated with) $\mathcal{E}$,
$\mathbb{E}[r_\theta^2\,\mathbf{1}_\mathcal{E}]=+\infty$, so $\mathrm{Var}(X)=+\infty$.

\textbf{RGPO variance is finite.}
Since $|w(r_\theta)|\le c$ uniformly, $\mathbb{E}[Y^2]\le c^2\sigma^2<\infty$ by
Theorem~\ref{thm:variance}, hence $\mathrm{Var}(Y)\le c^2\sigma^2<\infty$.
\end{proof}

\subsection{Proof of Theorem~\ref{thm:improvement} (Approximate Policy Improvement)}

\begin{proof}
By the performance difference lemma~\citep{kakade2002approximately}:
\begin{equation}
  J(\theta) - J(\theta_\text{old})
  = \mathbb{E}_{\pi_\theta}[A_\text{old}(s,a)]
  = \mathbb{E}_{\pi_\text{old}}[r_\theta\, A_\text{old}(s,a)].
\end{equation}
Decompose by adding and subtracting $g(r_\theta)$:
\begin{equation}
  J(\theta) - J(\theta_\text{old})
  = \underbrace{\mathbb{E}_{\pi_\text{old}}[g(r_\theta)\, A_\text{old}]}_{=\mathcal{L}_\text{RGPO}(\theta)}
  + \mathbb{E}_{\pi_\text{old}}\!\left[(r_\theta - g(r_\theta))\, A_\text{old}\right].
\end{equation}
For the second term, since $g$ has Lipschitz constant $L_g$:
\begin{equation}
  |r - g(r)| \le L_g\, |r - 1|.
\end{equation}
By Pinsker's inequality and the KL bound $D_\text{KL}(\pi_\theta\|\pi_\text{old}) \le \delta$:
\begin{equation}
  \mathbb{E}_{\pi_\text{old}}[|r_\theta - 1|]
  \le \sqrt{2\,D_\text{KL}(\pi_\theta\|\pi_\text{old})}
  \le \sqrt{2\delta}.
\end{equation}
Combining with $|A_\text{old}| \le A_{\max}$:
\begin{equation}
  \left|\mathbb{E}[(r_\theta - g(r_\theta))\, A_\text{old}]\right|
  \le L_g\, A_{\max}\, \sqrt{2\delta}
  = C\,\sqrt{\delta},
\end{equation}
where $C = L_g\,A_{\max}\sqrt{2}$ is a true constant (independent of $\delta$).
Rearranging gives $J(\theta) \ge J(\theta_\text{old}) + \mathcal{L}_\text{RGPO}(\theta) - C\sqrt{\delta}$.
\end{proof}

\section{Implementation Details}
\label{app:implementation}

\paragraph{Network architecture.}
We use a two-layer MLP with hidden size 256 and tanh activations for both
the policy and value networks, following the standard PPO implementation.

\paragraph{Hyperparameters.}
\begin{itemize}
  \item Learning rate: $3 \times 10^{-4}$ (Adam optimizer, $\epsilon=10^{-5}$)
  \item Discount factor: $\gamma = 0.99$
  \item GAE $\lambda = 0.95$
  \item Mini-batch size: 64
  \item Number of epochs per iteration: $n_\text{epochs} = 10$
  \item Rollout steps per iteration: 2048
  \item Observation normalization: running mean/variance (both methods)
  \item Reward normalization: running variance scaling (both methods)
  \item Value function clipping: $\epsilon_V = 0.2$ (both methods)
  \item Sigmoid sharpness: $k = 5$ (RGPO default; see Ablation~B for sweep)
  \item \textbf{Adaptive KL penalty (RGPO only):}
  \begin{itemize}
    \item Target KL: $\Delta = 0.02$
    \item Initial coefficient: $\beta_0 = 0.5$
    \item Coefficient bounds: $\beta \in [0.01,\; 5]$
    \item Update rule: Eq.~\eqref{eq:beta_update} (after every iteration)
    \item Hard KL backstop: $\tau = 0.1$
          (triggers early exit from epoch loop if mean KL exceeds $\tau$;
          the adaptive $\beta$ is the primary trust-region mechanism)
  \end{itemize}
  \item PPO clipping coefficient: $\epsilon = 0.2$
\end{itemize}

\paragraph{Sigmoid gating details.}
The sigmoid gate $g(r) = \sigma(k(r-1))$ is centered at $r=1$ so that a
ratio of exactly 1 (no policy change) yields $\alpha = 0.5$.  The sharpness
$k$ controls the transition: large $k$ approaches hard thresholding (similar
to PPO clipping), while small $k$ approaches uniform weighting.

\section{Additional Experiments}
\label{app:additional}

\paragraph{Sensitivity to $k$ (completed).}
Results for $k \in \{2, 5, 10, 20\}$ on HalfCheetah-v4 are reported in
Table~\ref{tab:k_sweep} of the main paper.

\paragraph{$\beta$ sensitivity (completed).}
Results for $\beta_0 \in \{0.2, 0.5, 1.0\}$ are reported in
Section~\ref{sec:ablation}, Ablation~F.

\paragraph{Multi-environment and multi-seed evaluation (completed).}
Results for PPO and RGPO on all four MuJoCo environments
(HalfCheetah-v4, Walker2d-v4, Hopper-v4, Ant-v4) with 3 seeds
are reported in Table~\ref{tab:main_results}.

\paragraph{AWR baseline comparison (completed).}
AWR ($\beta=1.0$, $n_\text{epochs}=10$) results are included in
Table~\ref{tab:main_results} and analyzed in Ablation~F and
Section~\ref{sec:discussion}.  Standard single-epoch AWR
($n_\text{epochs}=1$) is left for future work.

\paragraph{Large action spaces (completed via Ant-v4).}
Ant-v4 (8-dimensional action, 111-dimensional observation) results confirm
that the gating mechanism scales to higher-dimensional continuous control.
Full results appear in Table~\ref{tab:main_results}.

\paragraph{RLHF fine-tuning (completed).}
Preference-based fine-tuning of \texttt{Qwen2.5-1.5B-Instruct} on the
Anthropic HH-RLHF dataset is reported in Section~\ref{sec:rlhf}
and Table~\ref{tab:rlhf_main}.

\section{Gating Function Plots}
\label{app:gate_plots}

This appendix provides full visualisations of the three gating functions
compared in Section~\ref{sec:ablation} (Ablation~A) and their resulting
acceptance-weight distributions over training.

\begin{figure}[h]
\centering
\includegraphics[width=\linewidth]{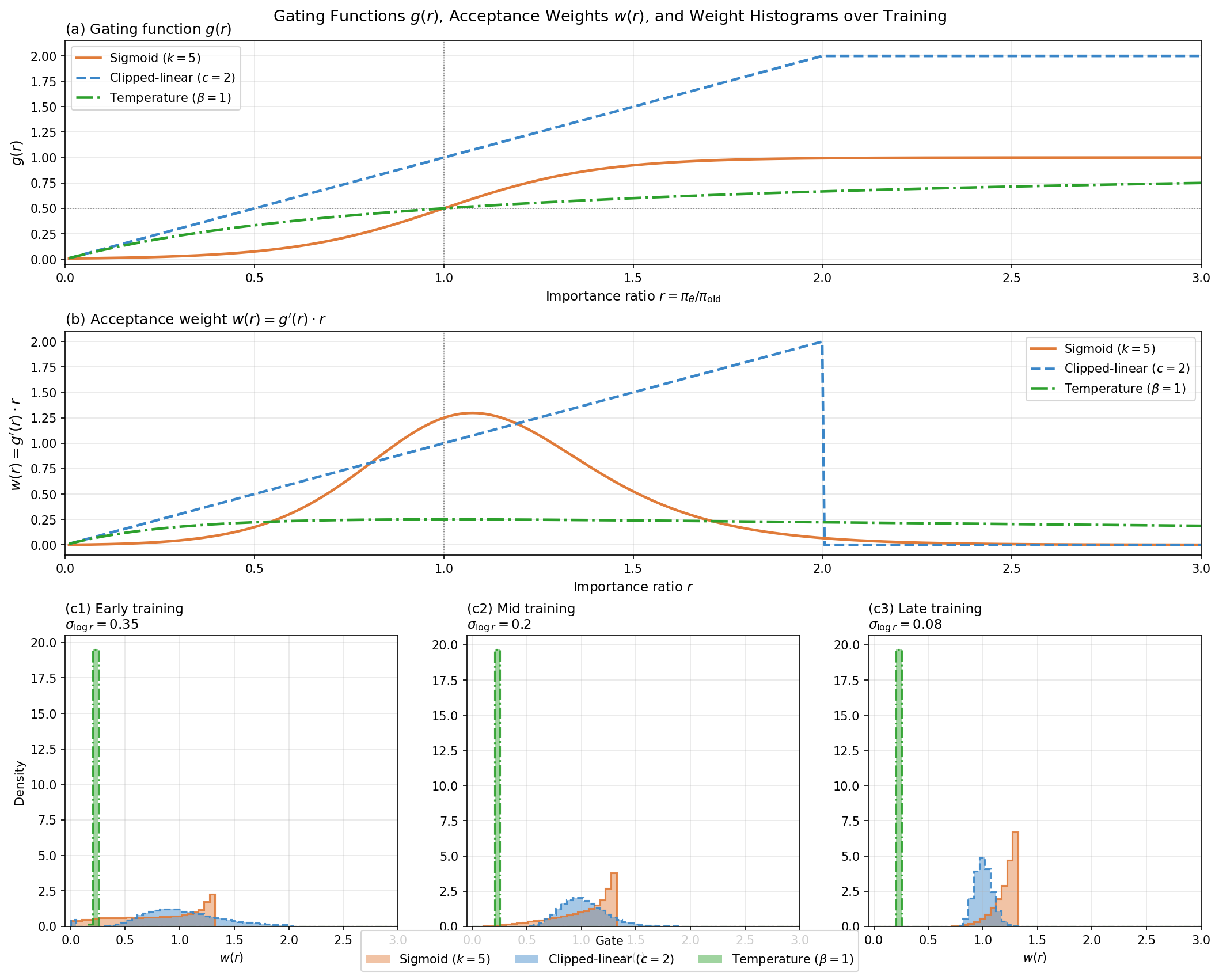}
\caption{%
  \textbf{Gating functions $g(r)$, acceptance weights $w(r)=g'(r)\cdot r$,
  and weight histograms over training.}
  \textbf{(a)} The gating function $g(r)$ for each variant.
  Sigmoid is bounded in $[0,1]$; clipped-linear grows linearly then is hard-capped;
  temperature ($\sigma(\beta\log r)$) is also bounded in $[0,1]$ but is symmetric
  around $r=1$.
  \textbf{(b)} The acceptance weight $w(r)=g'(r)\cdot r$, which modulates the
  effective contribution of each sample to the policy gradient.
  Sigmoid produces a bell-shaped $w$ centred near $r\!=\!1$, damping both
  under- and over-represented samples.
  Clipped-linear has $w(r)=r$ for $r<c$ (full IS correction) and $w=0$ for
  $r>c$ (hard exclusion), exposing gradient variance from off-policy samples
  with $r<c$.
  Temperature yields a bell-shaped $w(r)$ on $\log r$, bounded above by
  $\beta/4$ and vanishing as $r\to 0$ or $r\to\infty$, causing
  uniformly small gradient contributions and systematic underfitting.
  \textbf{(c1--c3)} Simulated acceptance-weight histograms at early, mid, and
  late training stages using a log-normal model for the IS ratio
  ($r\!\sim\!\mathrm{LogNormal}(0,\sigma_{\log r}^2)$ with $\sigma_{\log r}$
  decreasing from $0.35$ to $0.08$), consistent with empirical ratio
  distributions of near-on-policy updates.
  Sigmoid concentrates weight around the centre while suppressing outlier
  samples throughout training, explaining its favourable bias--variance
  trade-off (Table~\ref{tab:gate_cmp}).%
}
\label{fig:gate_plots}
\end{figure}

\paragraph{Interpretation.}
The qualitative differences in $g(r)$ and $w(r)$ directly explain the
empirical results in Table~\ref{tab:gate_cmp}:
\begin{itemize}
  \item \textbf{Sigmoid} uniquely provides a \emph{lower} gradient guard
    ($w\to 0$ as $r\to 0$), preventing destabilisation from near-zero-ratio
    samples that would otherwise receive full gradient weight under clipped-linear.
    This translates to the lowest cross-seed variance ($\pm 449$) observed.
  \item \textbf{Clipped-linear} lacks a lower guard: for $r\ll 1$ the weight
    $w(r)=r$ grows toward zero \emph{without damping}, allowing exploratory
    rollouts to produce high-variance gradient estimates that occasionally
    destabilise training ($\pm 1351$ seed variance).
  \item \textbf{Temperature} yields a bell-shaped $w(r)$ on $\log r$,
    bounded above by $\beta/4=0.25$ and vanishing as $r\to 0$ or $r\to\infty$.
    Each sample therefore contributes a very weak signal;
    within the 1M-step budget this leads to systematic underfitting
    (mean return $1886$, $-45\%$ vs.\ sigmoid).
\end{itemize}

\end{document}